\documentclass[runningheads]{llncs}

\usepackage{eccv}
\usepackage{eccvabbrv}
\usepackage{graphicx}
\usepackage{booktabs}
\usepackage[accsupp]{axessibility}

\newcommand{\Sref}[1]{Section~\ref{#1}}

\newcommand{\x}{\mathbf{x}}

\newcommand{\z}{\mathbf{z}}

\usepackage{algorithm}
\usepackage{algorithmic}

\usepackage{array}
\usepackage{multirow}
\usepackage[table]{xcolor}
\definecolor{nullbg}{gray}{0.93}
\newcolumntype{g}{>{\columncolor{nullbg}}c}

\usepackage{hyperref}
\usepackage{orcidlink}

\begin{document}

\title{Retrieving and Refining Winning Noise Tickets for Diffusion-Based Motion Generation}
\titlerunning{Winning Noise Tickets for Motion Generation}

\author{Sakuya Ota\inst{1}\orcidlink{0009-0008-8070-7263} \and
Qing Yu\inst{2}\orcidlink{0000-0001-6965-9581} \and
Kent Fujiwara\inst{2}\orcidlink{0000-0002-2205-6115} \and \\
Satoshi Ikehata\inst{3}\orcidlink{0000-0002-6061-7956} \and
Ikuro Sato\inst{1}\orcidlink{0000-0001-5234-3177}
}

\authorrunning{S. Ota et al.}

\institute{Institute of Science Tokyo, Tokyo, Japan \and
LY Corporation, Tokyo, Japan \and
National Institute of Informatics (NII), Tokyo, Japan
\email{ota\_sakuya@d-itlab.c.titech.ac.jp}
}

\maketitle

\begin{abstract}
Diffusion-based text-to-motion models synthesize realistic human motions but often exhibit semantic drift from the input text. Motion is inherently temporal, especially in compositional and long-duration sequences that require semantic consistency across multiple action segments and smooth kinematic transitions throughout the trajectory. We posit that the initial noise is central to this consistency: within the Gaussian noise space, certain instances, \ie \textit{winning noise tickets}, carry latent structure that biases denoising toward particular motion semantics, even under null prompts. We propose WInning Noise Retrieval and Optimization (WINRO), a training-free, model-agnostic framework that improves text--motion alignment by selecting and refining such tickets before diffusion sampling. WINRO maps random noises to motion features generated under null prompts, retrieves the best-aligned noise for a given text, and refines it via a KL-regularized objective that reduces the residual semantic gap while preserving the Gaussian prior. An optional LoRA-based adapter amortizes this refinement into a single forward pass. WINRO consistently improves text--motion fidelity across different base models, MDM and MotionLCM, on HumanML3D without retraining, improves temporal robustness on the MTT benchmark, and generalizes to applications such as motion stylization and spatial constraint satisfaction.
\keywords{Human motion generation \and Diffusion models \and Text-to-motion \and Noise optimization}
\end{abstract}

\section{Introduction}

Human motion generation is a fundamental task in computer graphics, robotics, and virtual reality, where the goal is to synthesize realistic, semantically coherent motion sequences from high-level inputs. In particular,
text-to-motion (T2M) diffusion models translate natural language descriptions into motion trajectories, enabling intuitive workflows for animation and embodied AI systems~\cite{athanasiou2022teach, chen2023executing, ghosh2021synthesis, kalakonda2022action, tevet2023human, zhang2023t2m, Plappert2016kit}.
Despite their success, diffusion-based T2M models often suffer from semantic drift, a challenge particularly pronounced in temporally structured, long-horizon generation, where the text specifies multiple action segments and the generated trajectory must realize them with consistent semantics and smooth kinematic transitions. This limits their applicability in scenarios demanding fine-grained control over style or body-part behavior.

\begin{figure}[t]
\centering
\includegraphics[width=1\linewidth]{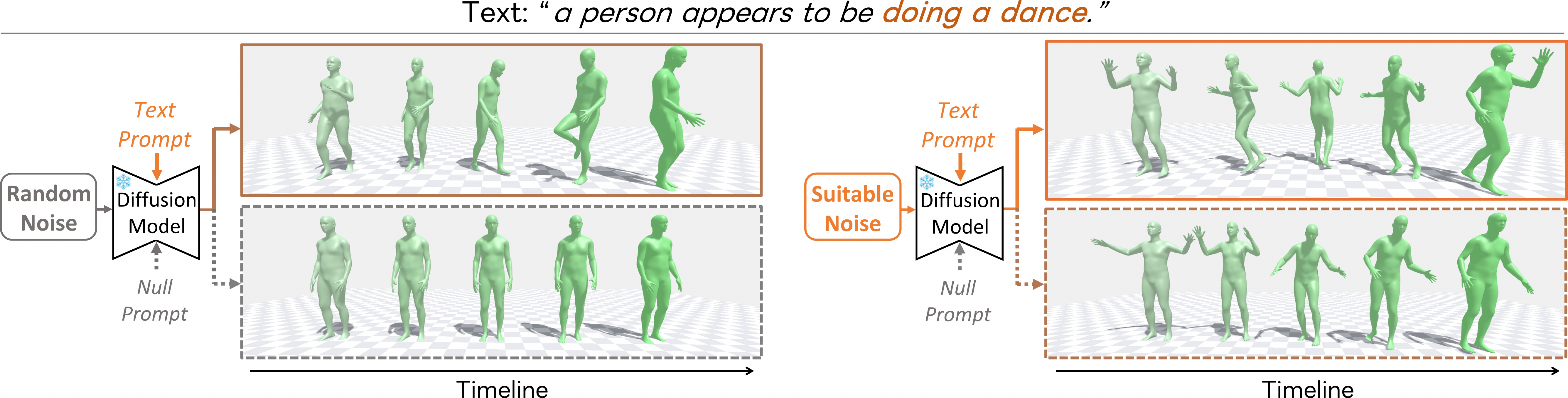}
\caption{Winning Noise Ticket Hypothesis for motion. Certain initial noises inherently encode motion semantics, producing meaningful motion even under a null prompt (right), while others yield static poses regardless of conditioning (left). WINRO retrieves and refines such tickets to improve text--motion alignment.}
\vspace{-15pt}
\label{fig:nullandtext}
\end{figure}

We posit that within the Gaussian noise space used to initialize diffusion-based motion generators, there exist specific noise instances, termed \textit{winning noise tickets}, that consistently bias the denoising trajectory toward temporally coherent motions with particular semantics or styles. This notion builds on the extension of the Lottery Ticket Hypothesis from network pruning~\cite{frankle2018lottery} to diffusion models~\cite{mao2024lottery}. Empirically, certain noise samples produce coherent, meaningful motions even under null prompts, while others yield implausible or text-misaligned results even when conditioned on descriptive text (Fig.~\ref{fig:nullandtext}). Thus, the initial noise is not mere randomness, but acts as a latent semantic prior that shapes the generative trajectory.
While similar observations have been made for image diffusion~\cite{mao2024lottery}, the effect is especially impactful in motion generation, where training data is orders of magnitude scarcer (approximately 15k samples vs.\ billions of images), making models more sensitive to the choice of initial noise.

Prior works that optimize initial noise in motion diffusion do so to satisfy physical or geometric constraints~\cite{karunratanakul2024dno, liu2024programmable}. Because their objectives are defined over predefined spatial targets rather than the text prompt itself, the optimized noise may fulfill the physical goal while still producing motion that drifts from the intended textual meaning. We address this gap by directly optimizing noise against text--motion semantic similarity, and ask:
\textit{Can we identify and refine the winning noise tickets that best embody the semantics of a given text prompt?}

To address this, we introduce \textbf{WI}nning \textbf{N}oise \textbf{R}etrieval and \textbf{O}ptimization (\textbf{WINRO}), a \textit{training-free and model-agnostic} framework that improves text--motion alignment by discovering and refining winning noise tickets before the diffusion process.
Selecting a semantically aligned noise requires associating each noise sample with a descriptor that (i) captures motion semantics and (ii) is comparable with text. We achieve this by constructing a \textit{noise dictionary}. We sample a random Gaussian noise and use a frozen diffusion model to generate a motion under a null prompt. A pretrained text--motion retrieval model then encodes the motion into a feature vector in a shared text--motion embedding space, which we store with the noise as a dictionary entry.
Given an input text prompt, WINRO retrieves the noise whose motion feature is closest to the text embedding in this space, providing a semantically favorable initialization.
The retrieved noise is then refined through a \textit{KL-regularized objective} that minimizes the residual text–motion discrepancy while preserving the Gaussian prior. Optionally, a trained LoRA-based Noise Refiner can amortize this iterative refinement into a single forward pass.
The key contributions are:
\begin{itemize}
\setlength\itemsep{0pt}
\item We identify the \textit{Winning Noise Ticket} in motion diffusion, showing that certain initial noises inherently bias denoising trajectories toward specific motion semantics.
\item We propose WINRO, a training-free framework for \emph{motion-specific} semantic noise retrieval and refinement, featuring a null-prompt motion dictionary, frame-adjusted retrieval across variable durations, and KL-regularized optimization in the text--motion embedding space.
\item We optionally extend WINRO with a LoRA-based Noise Refiner that, once trained, amortizes test-time optimization into a single forward pass without updating the base model weights.
\item We demonstrate consistent improvements in text--motion fidelity across diffusion backbones on HumanML3D, improved long-horizon temporal robustness on the MTT benchmark, and generalization to motion stylization and spatial constraint satisfaction.
\end{itemize}

\section{Related Work}
\label{sec:relatedwork}

\paragraph{Motion Generation Models.}
Advances in motion-capture and video-based reconstruction~\cite{kocabas2019vibe, ye2023slahmr, whamcvpr2024} have driven interest in generating realistic human motion.
Early methods focused on motion prediction or interpolation~\cite{holden2020learned, yuan2020dlow}, later extending to action-conditioned synthesis~\cite{Petrovich_2021_ICCV, Xu_2023_ICCV}.
More recently, \textit{text-to-motion (T2M)} diffusion models~\cite{ghosh2021synthesis, Guo_2022_CVPR_humanml3d, zhang2023t2m, jiang2023motiongpt} have emerged as powerful tools for natural-language-driven motion generation.
Notably, MotionDiffuse~\cite{zhang2022motiondiffuse}, MDM~\cite{tevet2023human}, MotionLCM~\cite{motionlcm}, and the skeleton-aware latent diffusion model SALAD~\cite{hong2025salad} achieve strong realism but treat initial noise as an unstructured random seed, leaving its semantic role unexplored.

\paragraph{Motion-Text Retrieval.}
Motion-text retrieval models~\cite{petrovich2023tmr, yu2024exploring, Fujiwara_2024_ECCV} learn aligned multimodal spaces for semantic comparison.
Retrieval-augmented generation (RAG) frameworks such as ReMoDiffuse~\cite{zhang2023remodiffuse} retrieve motion exemplars to condition the denoising process, improving fidelity through external references~\cite{yu2025remogpt, li2025remomask}.
Rather than retrieving exemplar motions, our method retrieves semantically meaningful initial noises whose induced null-prompt motions align with the input text, making it complementary to exemplar-based retrieval.

\paragraph{Initial Noise Selection and Optimization.}
Recent studies show that the initial noise in diffusion models affects generated outcomes~\cite{mao2023guided, xu2024good}, motivating noise optimization~\cite{chen2024find, eyring2024reno, guo2024initno, eyring2025noise} and selection~\cite{wang2024silent, mao2024lottery}.
The \textit{Lottery Ticket Hypothesis in Denoising}~\cite{mao2024lottery} suggests that certain ``winning'' noise patterns bias denoising toward specific semantics.
In motion generation, such biases are consequential, as the initial noise can affect temporal semantic consistency.
Accordingly, in the motion domain, works such as DNO~\cite{karunratanakul2024dno}, ProgMoGen~\cite{liu2024programmable}, and PINO~\cite{ota2025pino} optimize noise via physical or geometric constraints.
In this research, we target text--motion semantic alignment, retrieving noise from a precomputed dictionary and refining it against the text prompt.

\section{Winning Noise Tickets for Motion Diffusion}
\label{sec:winning_noise}

\subsection{Diffusion-Based Motion Generation}
We build upon diffusion-based text-to-motion (T2M) models, which generate motion by progressively denoising Gaussian noise.
A motion sequence $\x \in \mathbb{R}^{F \times D}$ has $F$ frames and $D{=}263$ features~\cite{Guo_2022_CVPR_humanml3d}.
The forward process gradually corrupts clean motion $\x_0$ into noise $\x_T$ over $T$ steps:
\begin{equation}
q(\x_t \mid \x_{t-1}) = \mathcal{N}(\x_t; \sqrt{\alpha_t}\x_{t-1}, (1-\alpha_t)\mathbf{I})\,,
\end{equation}
where $\alpha_t$ is the noise schedule. With a sufficiently long schedule, we approximate $\x_T \!\sim\! \mathcal{N}(\mathbf{0}, \mathbf{I})$.
The denoising network $G_\theta$ learns to invert this process:
\begin{equation}
\x_0 = G_\theta(\x_T, c) = [f_0 \circ f_1 \cdots \circ f_T](\x_T, c)\,,
\label{eq:generation}
\end{equation}
where $f_t$ is the $t$-th denoising update and $c$ is a conditioning input such as text.
Models like MDM~\cite{tevet2023human} operate in the motion domain, while latent variants (\eg, MLD~\cite{chen2023executing}) denoise compact motion latents.
With deterministic samplers (\eg, DDIM~\cite{song2021denoising}), there is a one-to-one mapping between the initial noise $\x_T$ and the generated motion $\x_0$, making the initial noise a controllable factor.

\subsection{Theoretical Perspective: Why Noise Encodes Motion Semantics}
\label{sec:theory_noise}
Under deterministic sampling, diffusion defines a continuous map from initial noise to a motion trajectory, so nearby noises tend to produce similar motions.
Composed with a retrieval encoder, the resulting text--motion alignment score varies smoothly over the noise space; hence high-alignment initial noise form \emph{regions} rather than isolated points, enabling both retrieval and local refinement.

\paragraph{Setup.}
Let $\mathcal{F}_\text{Motion}$ and $\mathcal{F}_\text{Text}$ be the motion--text retrieval encoders.
For deterministic sampling, define the cosine alignment score
\begin{equation}
S_c(\x_T)=
\frac{
\mathcal{F}_\text{Motion}(G_\theta(\x_T,c))^\top \mathcal{F}_\text{Text}(c)
}{
\lVert \mathcal{F}_\text{Motion}(G_\theta(\x_T,c))\rVert\,
\lVert \mathcal{F}_\text{Text}(c)\rVert
}\in[-1,1]\,.
\end{equation}
We use the \emph{null} score $\tilde{S}_c(\x_T)$ to describe intrinsic semantics under unconditional generation, defined by setting the diffusion condition to $c{=}\varnothing$ while keeping the query text embedding $\mathcal{F}_\text{Text}(c)$. We call $\x_T$ a \emph{winning noise ticket} for $c$ (level $\gamma$) if $\tilde{S}_c(\x_T)\ge\gamma$.

\paragraph{Smoothness implies semantic regions.}
If the deterministic sampler and $\mathcal{F}_\text{Motion}$ are locally Lipschitz, then $\tilde{S}_c$ is locally Lipschitz in $\x_T$.
Therefore, whenever $\tilde{S}_c(\x_T^\star)\ge \gamma$, there exists a radius $r>0$ such that all $\x_T$ with $\lVert \x_T-\x_T^\star\rVert\le r$ also satisfy $\tilde{S}_c(\x_T)\ge \gamma/2$.
This locality motivates (i) best-of-$N$ dictionary retrieval and (ii) small, prior-preserving refinements (Sec.~\ref{sec:method}).

\subsection{Evidence of Winning Noise Tickets}
To examine whether initial noise influences motion semantics, we analyze T2M generation using MDM~\cite{tevet2023human} and a text–motion retrieval model TMR~\cite{petrovich2023tmr}, measuring text--motion similarity as cosine similarity in the shared embedding space.

\begin{figure}[t]
  \centering
  \begin{subfigure}[t]{0.33\linewidth}
    \centering
    \includegraphics[width=\linewidth,trim=2pt 2pt 2pt 2pt,clip]{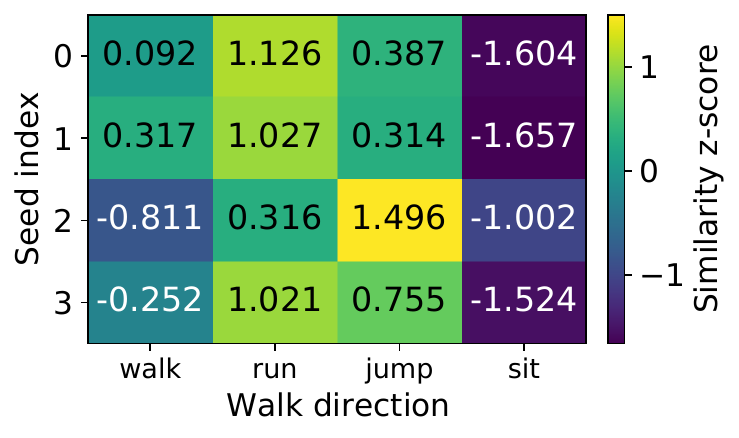}
    \subcaption{Action category}
    \label{fig:noise-a}
  \end{subfigure}\hfill
  \begin{subfigure}[t]{0.33\linewidth}
    \centering
    \includegraphics[width=\linewidth,trim=2pt 2pt 2pt 2pt,clip]{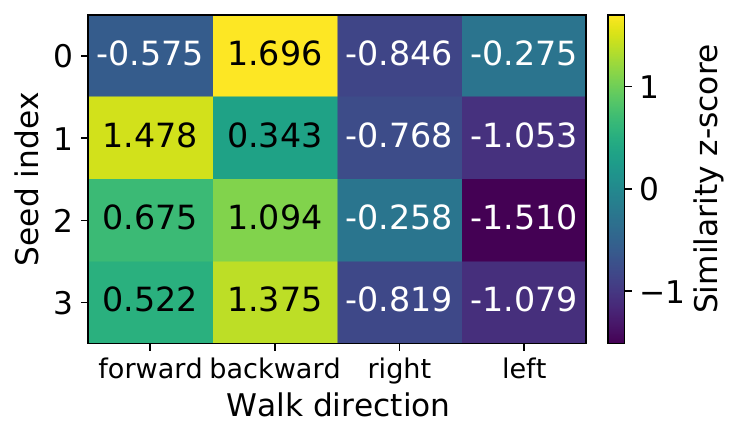}
    \subcaption{Action direction}
    \label{fig:noise-a-dir}
  \end{subfigure}
  \begin{subfigure}[t]{0.33\linewidth}
    \centering
    \includegraphics[width=\linewidth,trim=2pt 2pt 2pt 2pt,clip]{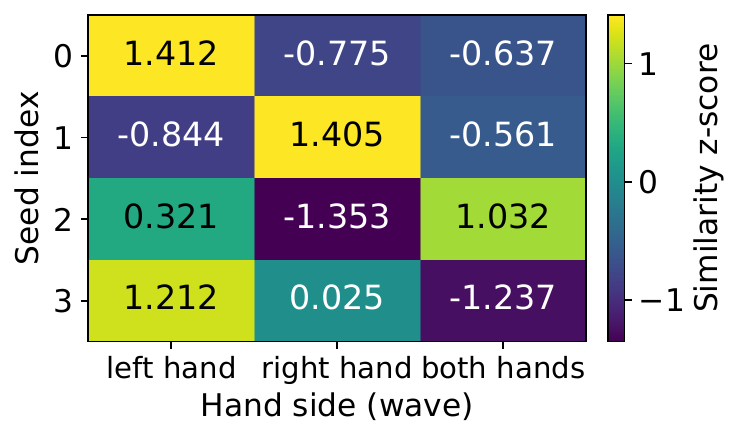}
    \subcaption{Action parts}
    \label{fig:noise-a-parts}
  \end{subfigure}
  \caption{$z$-score heatmaps of action-specific tendencies under four fixed noise seeds. For each seed and prompt group, we generate ten motions with varied prompts, compute the mean TMR similarity per category, and apply per-seed $z$-score normalization.}

  \label{fig:noise-analysis-fixseed}
\end{figure}

\begin{figure}[t]
  \centering
  \captionsetup{skip=3pt}
  \begin{subfigure}[t]{0.33\linewidth}
    \centering
    \includegraphics[width=\linewidth,trim=2pt 2pt 2pt 2pt,clip]{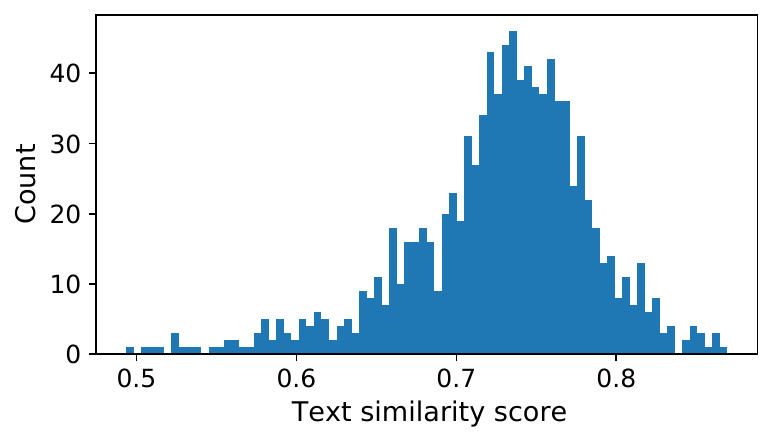}
    \subcaption{``walk''}
    \label{fig:noise-b-walk}
  \end{subfigure}\hfill
  \begin{subfigure}[t]{0.33\linewidth}
    \centering
    \includegraphics[width=\linewidth,trim=2pt 2pt 2pt 2pt,clip]{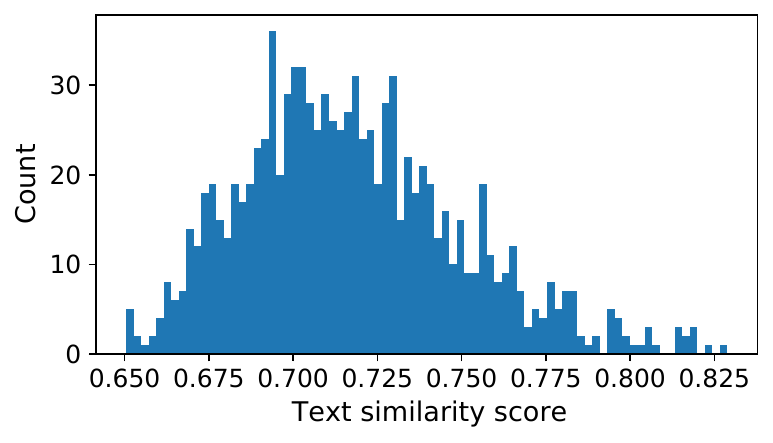}
    \subcaption{``run''}
    \label{fig:noise-b-run}
  \end{subfigure}
  \begin{subfigure}[t]{0.33\linewidth}
    \centering
    \includegraphics[width=\linewidth,trim=2pt 2pt 2pt 2pt,clip]{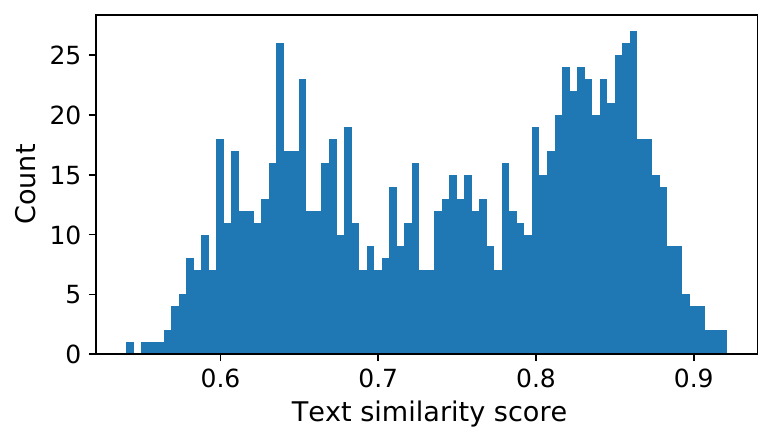}
    \subcaption{``jump''}
    \label{fig:noise-b-jump}
  \end{subfigure}
  \caption{Similarity-score histograms for the fixed prompts ``the person walks/runs/jumps.''
For each prompt, we compute TMR similarity scores over 1,000 initial noise seeds.}
  \label{fig:noise-analysis-fixprompt}
\end{figure}

\paragraph{Bias Patterns Under Fixed Noise.}
We fix four random noise seeds and measure text–motion similarity across prompt groups targeting action categories, directions, and body parts. Fig.~\ref{fig:noise-analysis-fixseed} shows clear seed-specific preferences, especially for direction and body-part movement (Fig.~\ref{fig:noise-a-dir},~\ref{fig:noise-a-parts}), indicating distinct semantic regions in the noise space.

\paragraph{Alignment Variability Under Noise Variation.}
Conversely, fixing three prompts and varying the noise over 1{,}000 seeds reveals prompt-dependent sensitivity (Fig.~\ref{fig:noise-analysis-fixprompt}): some prompts have only a small subset of highly compatible seeds, while others show broader variability. This confirms that prompt-specific winning noise tickets exist, motivating retrieval and refinement.

We build on these observations in Sec.~\ref{sec:method} by retrieving promising seeds from a null-prompt dictionary (Sec.~\ref{sec:stage_1}) and refining them with a KL-regularized objective that preserves the Gaussian prior (Sec.~\ref{sec:stage_2}).

\section{Winning Noise Retrieval and Optimization}
\label{sec:method}

\begin{figure}[t]
    \centering
    \includegraphics[width=1\linewidth]{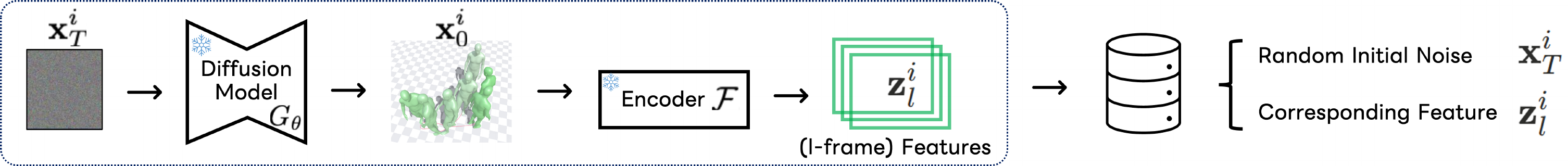}
    \caption{The pipeline of building noise dictionary $\mathcal{D}$. We randomly sample noise $\x^i_{T}$ and use a diffusion model $G_{\theta}$ and encoder $\mathcal{F}$ of a retrieval model to generate a feature $\mathbf{z}^i_{l}$. For the frame-adjusted dictionary, we truncate the output motion $\x^i_{0}$ at frame $l$ and generate the features.}
    \label{fig:dictionary}
\end{figure}

We propose to exploit this semantic bias of initial noise with two core stages: (i) retrieving a semantically favorable noise seed from a \textit{null-prompt noise dictionary} built in a text--motion embedding space, yielding WIN-FAR, a training-free and real-time variant, and (ii) optionally refining that seed by optimizing a KL-regularized text--motion similarity objective, yielding the full WINRO pipeline, which remains training-free but is iterative and therefore suited for offline or quality-critical use.
Under deterministic sampling, the alignment varies smoothly with $\x_T$ (Sec.~\ref{sec:theory_noise}), so retrieval provides a good initialization for local refinement.
Both stages are training-free and model-agnostic.

\subsection{Stage 1: Winning Noise Ticket Retrieval}
\label{sec:stage_1}
We build a noise dictionary $\mathcal{D}$ using a frozen diffusion model $G_\theta$ and a pretrained motion--text retrieval model (Fig.~\ref{fig:dictionary}).
This construction is a one-time offline cost for each backbone, after which the dictionary is reused across all queries. Per-query inference requires only a retrieval lookup and optional refinement, without additional diffusion runs for dictionary construction.
For each entry $i\in[1,\cdots,N]$, we sample $\x^i_T\!\sim\!\mathcal{N}(\mathbf{0},\mathbf{I})$, generate a null-prompt motion $\x^i_0=G_\theta(\x^i_T,\varnothing)$, and store its retrieval feature $\z^i_l=\mathcal{F}_\text{Motion}(\x^i_0)$:
\begin{equation}
\mathcal{D}_{reg}=\{(\z^i_l,\x^i_T)\}_{i=1}^N\,.
\end{equation}
To support retrieval across varying target durations, we introduce frame-adjusted retrieval (FAR). Specifically, we encode truncated motions at multiple lengths $l'\!\in[m,M]$ and store their length-specific retrieval features:
\begin{equation}
\mathcal{D}_{adj}=\{(\z^i_{l'},\x^i_T)\mid i\in[1,N],\,l'\in[m,M]\}\,.
\end{equation}

\paragraph{Why retrieval works.}
Let $\tilde{S}_c(\x_T)$ denote the null-prompt alignment score used for retrieval (Sec.~\ref{sec:theory_noise}).
If $p_{c,\gamma}=\mathbb{P}(\tilde{S}_c(\x_T)\ge\gamma)$, then for $N$ independent samples, $\mathbb{P}(\max_{i\in[N]}\tilde{S}_c(\x_T^{(i)})\ge\gamma)=1-(1-p_{c,\gamma})^N$. Thus, the probability of sampling a winning region increases with dictionary size $N$, but with diminishing returns.
Moreover, null-prompt retrieval can transfer to text prompts when conditioning acts as a limited perturbation in the retrieval space, consistent with classifier-free guidance mixing unconditional and conditional predictions. The detailed proof is provided in the supplementary materials.

\begin{figure}[t]
    \centering
    \includegraphics[width=1\linewidth]{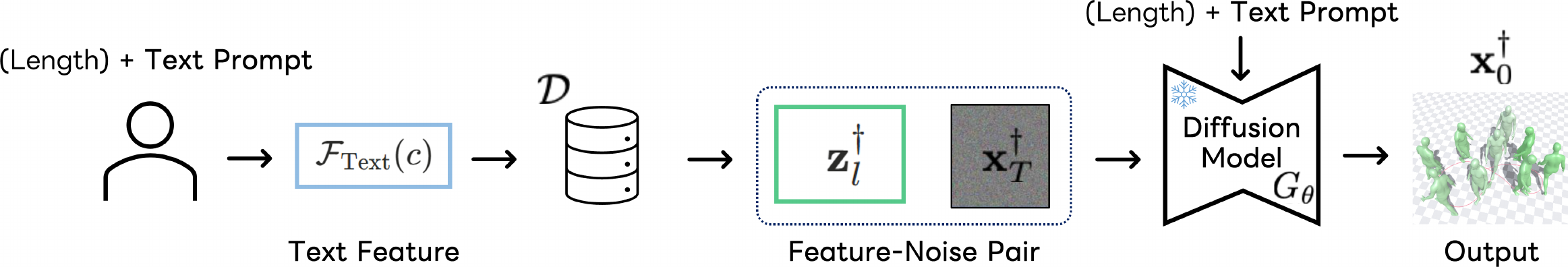}
    \caption{Overview of Winning Noise Ticket Retrieval. The input text $c$ and, optionally, the desired motion length $l$ are used to retrieve the closest feature $\z^{\dagger}_l$, whose corresponding noise $\x^{\dagger}_T$ is then used to generate the desired motion. Note that we freeze the feature encoder and diffusion model.}
    \label{fig:retoverview}
\end{figure}

At retrieval time, we compute the cosine similarity between each dictionary feature and the text embedding:
\begin{equation}
s(\z, c) = \frac{\z  \cdot \mathcal{F}_\text{Text}(c)}{\lVert \z \rVert \lVert \mathcal{F}_\text{Text}(c) \rVert}\,,
\end{equation}
where $\mathcal{F}_\text{Text}$ is the text encoder of the retrieval model. We rank all entries by similarity and define the top-$k$ candidate set:
\begin{equation}
\mathcal{N}_k(c) = \operatorname*{Top\text{-}k}_{(\z^i_l, \x^i_T) \in \mathcal{D}}\, s(\z^i_l, c)\,.
\label{eq:retrieval}
\end{equation}
where $\mathcal{D}\in\{\mathcal{D}_{\mathrm{reg}},\mathcal{D}_{\mathrm{adj}}\}$. The retrieved feature $\z^{\dagger}$ is then sampled uniformly from $\mathcal{N}_k(c)$. When $k{=}1$, this reduces to deterministic argmax retrieval.
Along with the provided text prompt $c$, the initial noise $\x^{\dagger}_T$ corresponding to $\z^{\dagger}$ is used to generate the final motion (Fig.~\ref{fig:retoverview}).
 We denote retrieval using $\mathcal{D}_{reg}$ as ``Retrieval'' and using $\mathcal{D}_{adj}$ as ``Frame-adjusted retrieval (FAR)''.

\paragraph{Noise perturbation.}
To avoid over-concentrating on a small subset of seeds while preserving semantics, we perturb the retrieved noise with small Gaussian jitter:
\begin{equation}
\x'_T = \x^{\dagger}_T + \boldsymbol{\eta}, \quad \boldsymbol{\eta} \sim \mathcal{N}(\mathbf{0}, \sigma^2 \mathbf{I})\,,
\label{eq:perturbation}
\end{equation}
where $\sigma$ controls the stochasticity.

\subsection{Stage 2: Winning Noise Ticket Refinement}
\label{sec:stage_2}
We refine the retrieved noise $\x^{\dagger}_T$ by maximizing text--motion similarity (Fig.~\ref{fig:optoverview}):
\begin{equation}
\mathcal{L}_{sim} = 1 - s(\mathcal{F}_\text{Motion}(\x_0), c) \,.
\end{equation}
As in retrieval, $\x_0$ can be truncated to the target length before encoding.
Directly optimizing this similarity loss can move the noise outside the Gaussian typical set (e.g., by drifting its empirical mean/variance), where the diffusion model is poorly calibrated and motion quality/diversity may degrade. We therefore add a KL-divergence term that penalizes deviations from the standard normal:
\begin{equation}
\mathcal{L}_{KL} = D_{KL}\!\left(\mathcal{N}(\mu(\x_T), \sigma^2(\x_T)) \| \mathcal{N}(0, 1)\right)\,,
\end{equation}
where $\mu(\x_T)$ and $\sigma^2(\x_T)$ are the empirical mean and variance of the noise tensor. The full objective is:
\begin{equation}
\mathcal{L} = \mathcal{L}_{sim} + \lambda \mathcal{L}_{KL}\,,
\label{eq:optimization}
\end{equation}
where $\lambda$ is the balancing parameter. We freeze the motion diffusion model and optimize only the initial noise to obtain $\x^*_T$ aligned with the user-provided text.

\begin{figure}[t]
    \centering
    \includegraphics[width=1\linewidth]{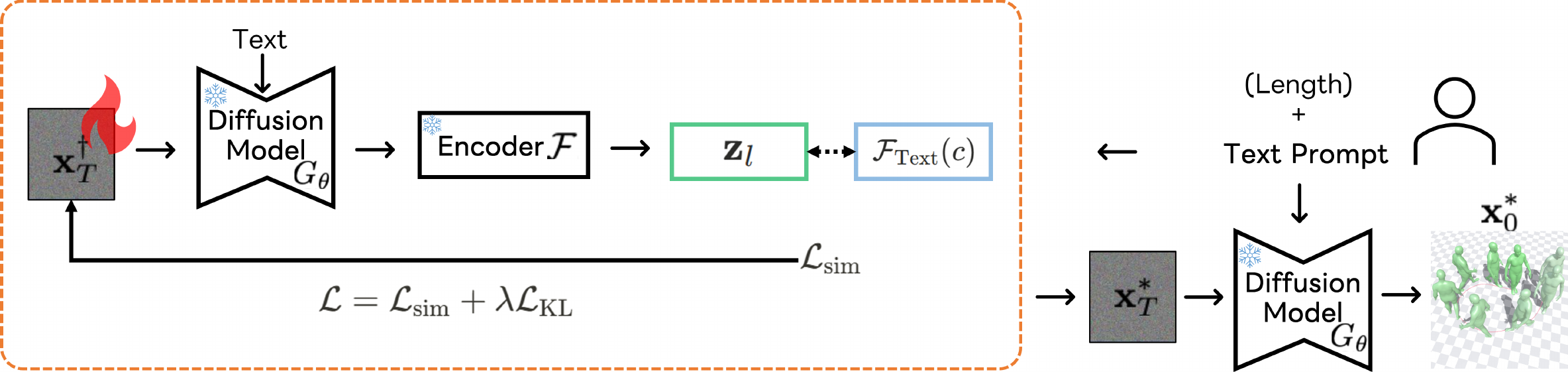}
    \caption{Overview of Winning Noise Ticket Refinement. The retrieved noise $\x^{\dagger}_T$ can further be refined to improve the quality of the output using input text $c$, and optionally the desired motion length $l$. The loss $\mathcal{L}$ is minimized to obtain the refined noise $\x^{*}_T$. }
    \label{fig:optoverview}
\end{figure}

\subsection{LoRA-Based Noise Refiner}
Iterative refinement runs diffusion sampling at every optimization step, which is costly (see AITS in Table~\ref{tab:win_far_nulltext}). As an amortized alternative, we train a Low-Rank Adaptation (LoRA)~\cite{hu2022lora} module that predicts a noise correction in a single forward pass, following~\cite{eyring2025noise}. Unlike the retrieval and optimization stages, the LoRA refiner requires an additional training step and is therefore not training-free, but it enables real-time inference:
\begin{equation}
    \hat{\x}_T = \x_T + R_\phi(\x_T)\,.
\end{equation}
Here, $R_\phi$ predicts a residual correction so that $\hat{\x}_T$ approximates the refined noise produced by retrieval and iterative optimization.
Only the LoRA weights $\phi$ are updated while all diffusion parameters $\theta$ remain frozen.
During training, $\x_T$ is sampled from the standard Gaussian prior rather than from the retrieval dictionary, so no explicit noise-to-noise supervision pairs are required.
The refiner minimizes a loss analogous to Eq.~\ref{eq:optimization}:
\begin{equation}
\begin{split}
\mathcal{L}_{LoRA} &=
1 - \cos\!\left(
\mathcal{F}_\text{Motion}(G_\theta(\hat{\x}_T, c)),
\mathcal{F}_\text{Text}(c)
\right) + \lambda \lVert R_\phi(\x_T) \rVert^2_2,
\end{split}
\end{equation}
where the second term regularizes the correction magnitude, helping the corrected noise maintain proximity to the Gaussian prior.
At inference, the refiner can be applied either to random noise or to the retrieved $\x^{\dagger}_T$ from Stage~1.

\section{Experiments}

In this section, we first verify the intrinsic semantics of winning noise tickets through null-prompt generation (Sec.~5.1), then evaluate the full WINRO pipeline on text-conditioned benchmarks (Sec.~5.2). We further examine compositional and long-duration generation (Sec.~5.3) and demonstrate task adaptability to motion stylization and controllable generation (Sec.~5.4). Experimental details are provided in the supplementary material (Sec.~B).

\subsection{Intrinsic Semantics of Winning Noise Tickets}
\paragraph{Generation with Null Prompt.}
To isolate the effect of noise selection from text conditioning, we first retrieve winning noise tickets using text embeddings but generate motions under a \textit{null prompt}, so that any observed semantic alignment must originate from the noise itself.
Following~\Sref{sec:stage_1}, we construct a noise dictionary containing 10{,}000 randomly sampled noise–motion pairs, where each feature is computed by encoding a null-prompt motion.
For efficiency, truncated samples are created every 4 frames, resulting in motions of lengths $40, 44, \ldots, 196$ for Frame-Adjusted Retrieval (FAR).
All noises are drawn from a standard Gaussian distribution, and no ground-truth motion data is used.

We conduct experiments on the HumanML3D dataset~\cite{Guo_2022_CVPR_humanml3d}.
Two representative diffusion backbones are adopted: MDM-50step~\cite{tevet2023human} and MotionLCM~\cite{motionlcm}.
The pretrained TMR model~\cite{petrovich2023tmr} is used to embed both motions and texts into a shared 256-dimensional feature space for retrieval. Unless otherwise noted, we use top-1 retrieval ($k{=}1$). Details are provided in the supplementary material.

Following prior works~\cite{tevet2023human,chen2023executing}, we report FID, R-Precision, Diversity, Multimodal Distance (MMDist), MultiModality, and additionally include AITS. FID measures the distributional distance between generated and real motions in a learned feature space, reflecting overall motion quality. R-Precision and MMDist evaluate text--motion alignment, while Diversity measures variation across generated motions. MultiModality measures variation among motions generated from the same text; see Section~C. \textbf{AITS (Average Inference Time per Sentence)} reports batch-size-one wall-clock inference time in seconds, excluding data/model loading and the one-time noise dictionary construction. All reported experiments use a single NVIDIA H100 GPU, except where explicitly stated.

\begin{table}[t]
    \centering
    \caption{Performance of MDM-50step and MotionLCM using only WIN retrieval on HumanML3D under generation with \textit{null} prompts. Results are averaged over 20 evaluation runs with 95\% confidence intervals.}
    \setlength{\tabcolsep}{1pt}
    \resizebox{\linewidth}{!}{
    \begin{tabular}{lcccccc}
        \toprule
        \textbf{Method} &  \textbf{FID} $\downarrow$ & \textbf{R-Top1} $\uparrow$ & \textbf{R-Top2} $\uparrow$ & \textbf{R-Top3} $\uparrow$ & \textbf{Diversity} $\rightarrow$ & \textbf{MMDist} $\downarrow$ \\
        \midrule \midrule
        GT & $0.002^{\scriptstyle\pm.000}$ & $0.511^{\scriptstyle\pm.002}$ & $0.704^{\scriptstyle\pm.003}$ & $0.798^{\scriptstyle\pm.002}$ & $9.458^{\scriptstyle\pm.088}$ & $2.970^{\scriptstyle\pm.007}$ \\
        \midrule
        \rowcolor[gray]{0.90} MDM-50step  & $4.956^{\scriptstyle\pm.150}$ & $0.037^{\scriptstyle\pm.002}$ & $0.070^{\scriptstyle\pm.002}$ & $0.102^{\scriptstyle\pm.004}$ & $8.101^{\scriptstyle\pm.087}$ & $8.932^{\scriptstyle\pm.039}$ \\
        w/ WIN-R & $4.500^{\scriptstyle\pm.228}$ & $0.250^{\scriptstyle\pm.006}$ & $0.385^{\scriptstyle\pm.009}$ & $0.474^{\scriptstyle\pm.008}$ & $8.482^{\scriptstyle\pm.068}$ & $5.540^{\scriptstyle\pm.057}$ \\
        w/ WIN-FAR & $\mathbf{4.458}^{\scriptstyle\pm.200}$ & $\mathbf{0.274}^{\scriptstyle\pm.006}$ & $\mathbf{0.415}^{\scriptstyle\pm.006}$ & $\mathbf{0.507}^{\scriptstyle\pm.007}$ & $\mathbf{8.547}^{\scriptstyle\pm.059}$ & $\mathbf{5.325}^{\scriptstyle\pm.046}$ \\
        \midrule
        \rowcolor[gray]{0.90} MotionLCM & $8.460^{\scriptstyle\pm.072}$ & $0.034^{\scriptstyle\pm.001}$ & $0.067^{\scriptstyle\pm.002}$ & $0.099^{\scriptstyle\pm.002}$ & $6.646^{\scriptstyle\pm.064}$ & $8.313^{\scriptstyle\pm.010}$ \\
        w/ WIN-R  & $1.681^{\scriptstyle\pm.071}$ & $0.404^{\scriptstyle\pm.002}$ & $0.597^{\scriptstyle\pm.004}$ & $0.710^{\scriptstyle\pm.003}$ & $8.386^{\scriptstyle\pm.029}$ & $3.613^{\scriptstyle\pm.020}$ \\
        w/ WIN-FAR  & $\mathbf{1.569}^{\scriptstyle\pm.071}$ & $\mathbf{0.424}^{\scriptstyle\pm.003}$ & $\mathbf{0.617}^{\scriptstyle\pm.004}$ & $\mathbf{0.728}^{\scriptstyle\pm.004}$ & $\mathbf{8.424}^{\scriptstyle\pm.039}$ & $\mathbf{3.485}^{\scriptstyle\pm.021}$ \\
        \bottomrule
    \end{tabular}
    }
    \label{tab:retrieval_null}
\end{table}

As shown in Table~\ref{tab:retrieval_null}, retrieved winning noise tickets enable both MDM-50step and MotionLCM to produce semantically aligned motions even under null prompts, where baselines yield near-zero alignment. This validates the Winning Noise Ticket Hypothesis (Sec.~\ref{sec:winning_noise}) by showing that semantic information can be partially encoded in the initial noise itself, independent of text conditioning.

\subsection{Generation with Refined Winning Noise Tickets}
We next examine whether this effect carries over to text-conditioned generation with the full WINRO pipeline.
The retrieved noise $\x_T^{\dagger}$ from the FAR dictionary is optimized using the KL-regularized objective (Eq.~\ref{eq:optimization}), yielding a refined noise $\x_T^{*}$.
The optimization is performed at inference time without modifying model parameters. Details are in the supplementary material.
Alternatively, a \textit{LoRA Noise Refiner} can approximate this refinement in a single feed-forward pass; when combined with retrieval (WIN-RLoRA), it amortizes the iterative optimization while preserving the benefit of a semantically aligned initialization.

\begin{table}[t]
    \centering
    \caption{Performance of MDM-50step and MotionLCM with WINRO on HumanML3D under text-conditioned generation. MotionLCM uses 1-step inference. Results are averaged over 20 evaluation runs with 95\% confidence intervals.}
    \setlength{\tabcolsep}{1pt}
    \resizebox{\linewidth}{!}{
    \begin{tabular}{lccccccc}
        \toprule
        \textbf{Method} & \textbf{FID} $\downarrow$ & \textbf{R-Top1} $\uparrow$ & \textbf{R-Top2} $\uparrow$ & \textbf{R-Top3} $\uparrow$ & \textbf{Div.} $\rightarrow$ & \textbf{MMDist} $\downarrow$ & \textbf{AITS} $\downarrow$\\
        \midrule \midrule
        GT & $0.002^{\scriptstyle\pm.000}$ & $0.511^{\scriptstyle\pm.002}$ & $0.704^{\scriptstyle\pm.003}$ & $0.798^{\scriptstyle\pm.002}$ & $9.458^{\scriptstyle\pm.088}$ & $2.970^{\scriptstyle\pm.007}$  & -\\
        \midrule
        \rowcolor[gray]{0.90} MDM-50step  & $0.438^{\scriptstyle\pm.009}$ & $0.469^{\scriptstyle\pm.002}$ & $0.661^{\scriptstyle\pm.002}$ & $0.763^{\scriptstyle\pm.002}$ & $10.025^{\scriptstyle\pm.066}$ & $3.258^{\scriptstyle\pm.009}$  & \textbf{0.265} \\
        w/ WIN-R & $0.250^{\scriptstyle\pm.007}$ & $0.505^{\scriptstyle\pm.002}$ & $0.702^{\scriptstyle\pm.003}$ & $0.799^{\scriptstyle\pm.003}$ & $10.303^{\scriptstyle\pm.048}$ & $3.047^{\scriptstyle\pm.012}$ & -- \\
        w/ WIN-FAR & $0.248^{\scriptstyle\pm.009}$ & $0.512^{\scriptstyle\pm.003}$ & $0.706^{\scriptstyle\pm.003}$ & $0.804^{\scriptstyle\pm.002}$ & $10.295^{\scriptstyle\pm.033}$ & $3.013^{\scriptstyle\pm.011}$  & 0.268 \\
        w/ WINRO & $0.115^{\scriptstyle\pm.002}$ & $0.539^{\scriptstyle\pm.002}$ & $0.727^{\scriptstyle\pm.004}$ & $0.817^{\scriptstyle\pm.002}$ & $9.657^{\scriptstyle\pm.026}$ & $2.883^{\scriptstyle\pm.010}$  & 81.408 \\
        w/ WIN-RLoRA & $\mathbf{0.102}^{\scriptstyle\pm.002}$ & $\mathbf{0.546}^{\scriptstyle\pm.003}$ & $\mathbf{0.741}^{\scriptstyle\pm.003}$ & $\mathbf{0.832}^{\scriptstyle\pm.002}$
        & $\mathbf{9.577}^{\scriptstyle\pm.025}$ & $\mathbf{2.846}^{\scriptstyle\pm.007}$ & 0.280 \\ \midrule
        \rowcolor[gray]{0.90} MotionLCM & $0.093^{\scriptstyle\pm.004}$ & $0.548^{\scriptstyle\pm.003}$ & $0.741^{\scriptstyle\pm.003}$ & $0.835^{\scriptstyle\pm.002}$ & $9.566^{\scriptstyle\pm.069}$ & $2.763^{\scriptstyle\pm.008}$ & \textbf{0.029} \\
        w/ WIN-R & $0.091^{\scriptstyle\pm.004}$ & $0.566^{\scriptstyle\pm.003}$ & $0.762^{\scriptstyle\pm.002}$ & $0.852^{\scriptstyle\pm.002}$ & $9.424^{\scriptstyle\pm.028}$ & $2.699^{\scriptstyle\pm.007}$ & -- \\
        w/ WIN-FAR  & $0.083^{\scriptstyle\pm.003}$ & $0.572^{\scriptstyle\pm.003}$ & $0.767^{\scriptstyle\pm.002}$ & $0.855^{\scriptstyle\pm.001}$ & $\mathbf{9.474}^{\scriptstyle\pm.028}$ & $2.674^{\scriptstyle\pm.006}$  & 0.033 \\
        w/ WINRO & $\mathbf{0.072}^{\scriptstyle\pm.002}$ & $\mathbf{0.580}^{\scriptstyle\pm.002}$ & $\mathbf{0.775}^{\scriptstyle\pm.002}$ & $\mathbf{0.861}^{\scriptstyle\pm.001}$ & $9.242^{\scriptstyle\pm.026}$ & $\mathbf{2.609}^{\scriptstyle\pm.005}$ & 6.870 \\
        w/ WIN-RLoRA & $0.083^{\scriptstyle\pm.002}$ & $\mathbf{0.580}^{\scriptstyle\pm.002}$ & $0.772^{\scriptstyle\pm.001}$ & $0.860^{\scriptstyle\pm.001}$
        & $9.550^{\scriptstyle\pm.032}$ & $2.649^{\scriptstyle\pm.005}$ &  0.043 \\
        \bottomrule
    \end{tabular}
    }
    \label{tab:win_far_nulltext}

\end{table}

Table~\ref{tab:win_far_nulltext} summarizes the text-conditioned generation results. Each stage of WINRO yields consistent improvements across both diffusion backbones.
Plain retrieval (WIN-R) already improves text alignment over the baselines, reducing the FID for MDM-50step from $0.438$ to $0.250$. Frame-adjusted retrieval with perturbation (WIN-FAR) further improves upon WIN-R (FID $0.248$, R-Top1 $0.512$); the perturbation (Eq.~\ref{eq:perturbation}) restores local diversity around the retrieved noise, mitigating the distribution shift caused by discrete selection.
Subsequent refinement (WINRO) further strengthens alignment: for MDM-50step, the FID decreases to $0.115$ and R-Top1 rises to $0.539$; for MotionLCM, WINRO achieves the best R-Top1 of $0.580$ and MMDist of $2.609$. Since the optimization objective already incorporates KL regularization (Eq.~\ref{eq:optimization}), we omit additional perturbation during refinement.
LoRA-based refinement with retrieval and perturbation (WIN-RLoRA) provides an amortized alternative to full WINRO refinement, which is iterative and suited for offline or quality-critical use. Although WIN-RLoRA requires an additional training step and is therefore not training-free, it avoids iterative test-time optimization. It further improves over WINRO on MDM-50step and remains competitive on MotionLCM, while reducing AITS from $81.408$ to $0.280$ and from $6.870$ to $0.043$, respectively.

Note that the retrieval model (TMR~\cite{petrovich2023tmr}) used for dictionary construction and refinement is independent of the evaluation model used for the HumanML3D metrics. We further verify that improvements are not specific to the choice of retrieval backbone by repeating the experiment with an alternative encoder; details are provided in the supplementary material.
The noise dictionary is constructed once offline per backbone: approximately 4 minutes for MDM-50step and 2 minutes for MotionLCM on a single A100 GPU, after which per-query inference requires only retrieval (negligible) and optional refinement.
Additional analyses and ablations are provided in the supplementary material.

\begin{figure}[t]
    \centering
    \includegraphics[width=\linewidth]{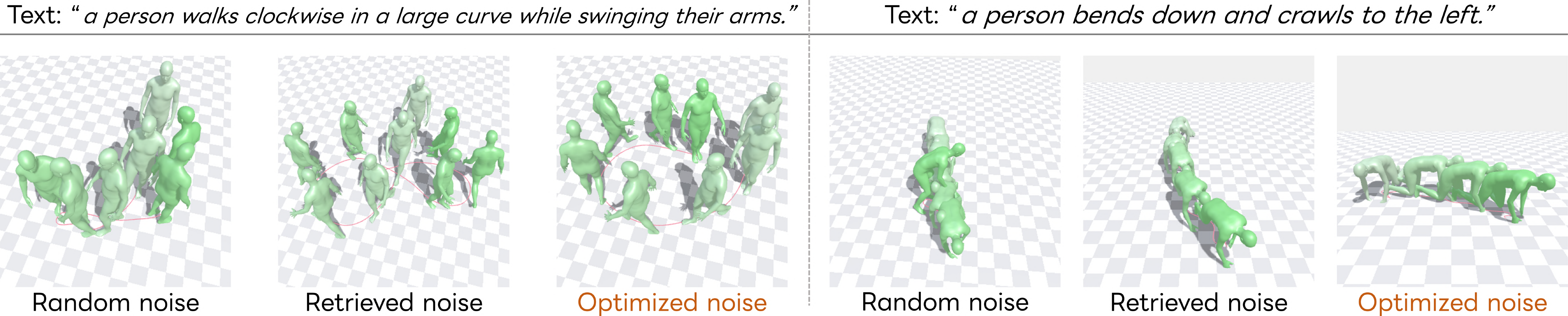}
    \caption{Comparison of motions generated with random, retrieved, and refined initial noise for each text prompt, using MDM-50step with and without our method. The texts shown above each row are used for generation. We recommend viewing the supplementary video for better visualization of motion dynamics.}
    \label{fig:basecomparison}
\end{figure}

\paragraph{Qualitative Results.}
Fig.~\ref{fig:basecomparison} compares motions generated with random noise, retrieved noise (WIN-FAR), and refined noise (WINRO) using MDM-50step.

In the first example, the prompt demands a clockwise walk. Random noise produces a generic walking motion with no clear trajectory, while the retrieved noise captures a partial arc but eventually turns in the wrong direction. In contrast, WINRO generates a smooth clockwise arc that closely matches the intended motion.
In the second example, the prompt requires the person to bend down and crawl to the left. Random noise yields a crawling motion that ends in an unintended standing pose, and the retrieved noise produces only slight leftward motion. With WINRO refinement, the model is able to generate a consistent leftward crawl as instructed.

These qualitative results illustrate the complementary roles of retrieval and refinement: retrieval selects a noise seed with an inherent bias toward the desired behavior, while refinement further adjusts it to satisfy the prompt more faithfully. Additional examples
are in the supplementary material.

\subsection{Compositional and Long-Duration Generation.}
Practical motion generation often requires composing multiple actions within a single sequence: actions may be arranged in temporal order (\eg, ``walk forward, then sit down'') or performed simultaneously on different body parts (\eg, ``wave the right hand while walking'').
The multi-track timeline (MTT) formulation~\cite{petrovich2024stmc} unifies these settings by assigning each text prompt to a temporal interval on parallel tracks, allowing overlapping and sequential actions to be specified in a single structured input. Generating faithful motions from such timelines is challenging because the model must maintain semantic alignment across all intervals, especially as the total duration grows.
WINRO naturally fits this setting, since it can also be applied to multiple interval-level text constraints.

\paragraph{Method.}
We evaluate on the MTT benchmark~\cite{petrovich2024stmc} under two settings: (1) single-text compositional generation ($\le$196 frames) using MDM-50step, where the full WINRO pipeline is applied, and (2) multi-track long-duration generation using STMC~\cite{petrovich2024stmc}, where only the optimization stage is used because constructing a noise dictionary from variable-length null-prompt outputs is computationally impractical at extended durations. For the multi-track setting, the similarity loss is defined as the average text--motion similarity over timeline intervals.
Each motion subsequence is encoded and compared with its corresponding text prompt, enabling the optimization to maintain semantic consistency across segments.

\paragraph{Results.}
As shown in Table~\ref{tab:mtt}, both configurations yield consistent improvements, with particularly large gains in long-duration generation (R@1: $33.6 \to 62.1$). These results suggest that winning noise tickets carry semantic priors that remain effective even in extended and compositional settings, where semantic drift is otherwise difficult to control. Fig.~\ref{fig:stmc_qualitative} shows qualitative examples: STMC alone suffers from semantic drift in later segments, whereas WIN-O maintains faithful alignment with each text prompt throughout the timeline.

\begin{table}[t]
\centering
\caption{Results on the MTT benchmark~\cite{petrovich2024stmc}.}
\setlength{\tabcolsep}{3pt}
\resizebox{\linewidth}{!}{
\begin{tabular}{lcc|cccc|cc}
\toprule
& \multicolumn{2}{c|}{\textbf{Input type}} & \multicolumn{4}{c|}{\textbf{Per-crop semantic correctness}} & \multicolumn{2}{c}{\textbf{Realism}} \\
\textbf{Method} & \#tracks & \#crops & R@1 $\uparrow$ & R@3 $\uparrow$ & M2T $\uparrow$ & M2M $\uparrow$ & FID $\downarrow$ & Trans. $\downarrow$ \\
\midrule \midrule
Ground Truth & -- & -- & 55.0 & 73.3 & 0.748 & 1.000 & 0.000 & 1.5 \\
\midrule
\rowcolor[gray]{0.90} MDM-50step & Single & Single & 12.2 & 24.6 & 0.571 & 0.561 & 0.543 & 2.2 \\
w/ WINRO & Single & Single & \textbf{20.7} & \textbf{38.1} & \textbf{0.661} & \textbf{0.597} & \textbf{0.492} & \textbf{2.1} \\
\midrule
\rowcolor[gray]{0.90} w/ STMC~\cite{petrovich2024stmc} & Multi & Multi & 33.6 & 55.3 & 0.687 & 0.656 & 0.480 & 1.7 \\
w/ STMC + WIN-O & '' & '' & \textbf{62.1} & \textbf{79.1} & \textbf{0.807} & \textbf{0.702} & \textbf{0.452} & \textbf{1.4} \\
\bottomrule
\end{tabular}
}
\label{tab:mtt}
\end{table}

\begin{figure}[t]
    \centering
    \includegraphics[width=\linewidth]{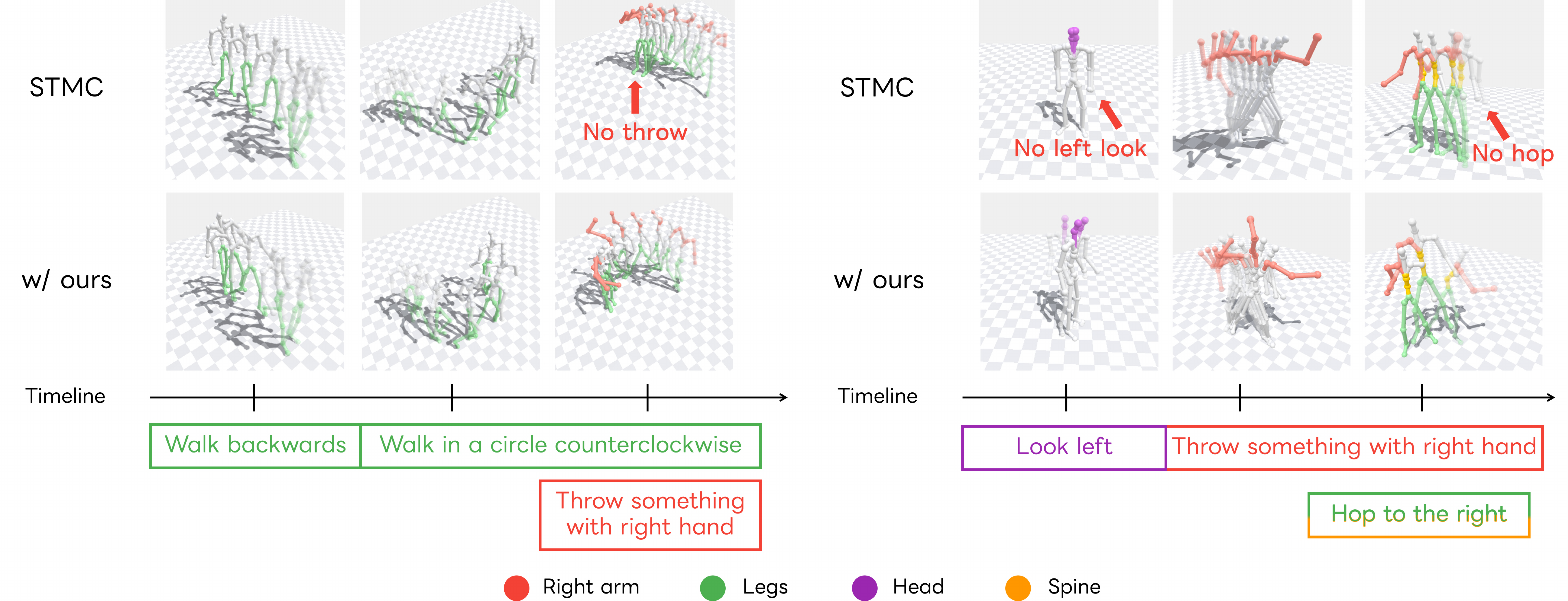}
    \caption{Comparison of motions generated by STMC~\cite{petrovich2024stmc} with and without initial noise refinement on multi-track long-duration generation. STMC produces motions that are not faithful to the text prompt of each segment (\eg, missing throw or hop actions), while refining the initial noise improves alignment.}
    \label{fig:stmc_qualitative}
\end{figure}

\subsection{Task Adaptability}
\label{sec:task_adapt}
As noted in Sec.~\ref{sec:method}, the retrieval and optimization stages are modular: the dictionary can be replaced with a domain-specific one or omitted,
and the optimization objective can incorporate additional loss terms. We demonstrate this flexibility on two tasks.

\subsubsection{Motion Stylization.}
Motion stylization aims to generate motions that follow a given text prompt while exhibiting a particular movement style (\eg, performing a ``walking'' motion in a ``proud'' or ``depressed'' manner), guided by a reference style motion.

\begin{figure}[t]
    \centering
    \includegraphics[width=\linewidth]{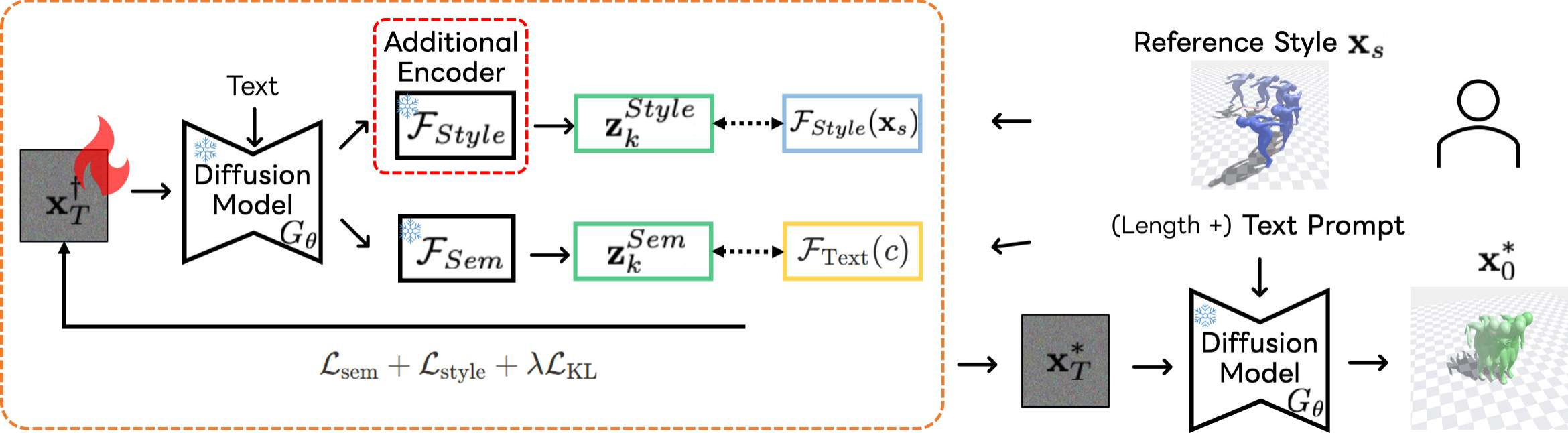}
    \caption{Overview of WINRO for motion stylization. The generic noise dictionary is replaced with a style-specific one built from 100STYLE motions~\cite{mason2022local} via DDIM inversion. The optimization jointly minimizes semantic loss $\mathcal{L}_\text{sim}$, style loss $\mathcal{L}_\text{style}$, and KL regularization term $\lambda\mathcal{L}_\text{KL}$.}
    \label{fig:styleoverview}
\end{figure}

\paragraph{Method.}
We extend WINRO by constructing a style-specific noise dictionary $\mathcal{D}_s$: we apply DDIM inversion to motions from the 100STYLE dataset~\cite{mason2022local}, so that each entry inherently encodes stylistic attributes.
Given a text prompt, we retrieve the best-matching entry from $\mathcal{D}_s$ using the same text-based retrieval as in Eq.~\ref{eq:retrieval}, then refine the retrieved noise by jointly optimizing the semantic loss $\mathcal{L}_{sim}$ (Eq.~\ref{eq:optimization}) and a style loss $\mathcal{L}_{style} = 1 - \cos(\mathcal{F}_\text{Style}(\x^{\dagger}_0), \mathcal{F}_\text{Style}(\mathbf{x}_s))$, where $\mathbf{x}_s$ is the reference style motion.
We compare WIN-O (optimization only from random noise) and WINRO (retrieval from $\mathcal{D}_s$ + optimization). Fig.~\ref{fig:styleoverview} illustrates the pipeline.
Since WINRO is model-agnostic, it can be applied to any diffusion-based stylization backbone. We adopt SMooDi~\cite{zhong2024smoodi} for its public availability.

\begin{table}[t]
\centering
\setlength{\tabcolsep}{1pt}
\caption{Effect of WINRO on motion stylization using SMooDi as the backbone.}
\begin{tabular}{lcccccc}
\toprule
\textbf{Method} & \textbf{FID} $\downarrow$ & \textbf{Foot Skating} $\downarrow$ & \textbf{MMDist} $\downarrow$ & \textbf{R-Top3} $\uparrow$ & \textbf{Diversity} $\rightarrow$ & \textbf{SRA} $\uparrow$ \\
\midrule
\rowcolor[gray]{0.90} SMooDi~\cite{zhong2024smoodi}  & 1.609 & 0.124 & 4.477 & 0.571 & 9.235  & 72.418 \\
w/ WIN-O & \textbf{0.551}  & \textbf{0.104} & \textbf{3.550} & \textbf{0.717} & \textbf{9.071}  & 68.029 \\
w/ WINRO & 1.066  & 0.129 & 4.055 & 0.633 & 7.766  & \textbf{75.818} \\
\bottomrule

\end{tabular}
\label{tab:stylization}
\end{table}

\begin{figure}[t]
    \centering
    \includegraphics[width=1\linewidth]{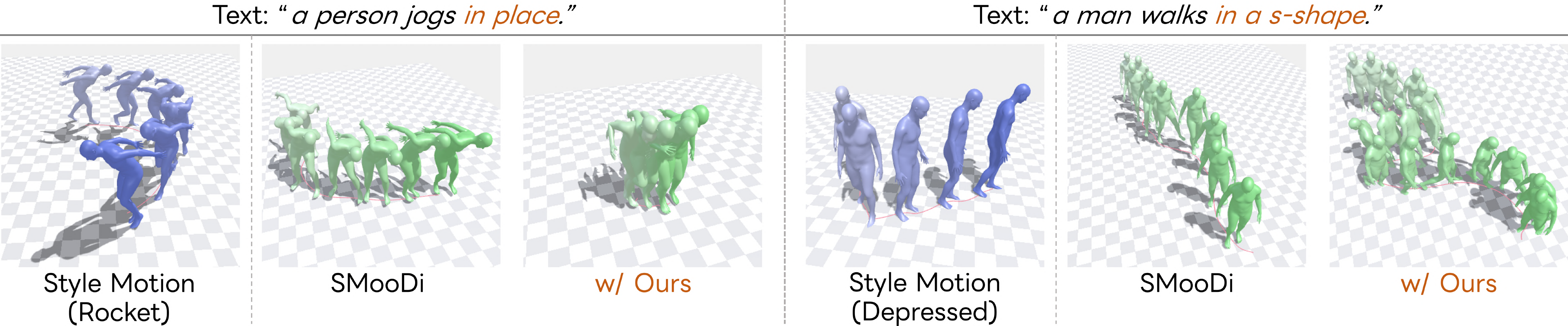}
    \caption{Qualitative results from the stylization experiment.
    Reference style motion at the left and the text prompt at the top of each example are used for generation.}
    \label{fig:stylecomparison}
\end{figure}

\paragraph{Results.}
Table~\ref{tab:stylization} shows that both variants improve upon the SMooDi backbone without modifying its parameters.
WIN-O, starting from random noise, optimizes freely toward text alignment and achieves the best FID ($0.551$ vs.\ $1.609$), MMDist, and R@3. WINRO, starting from the style-specific dictionary, begins closer to the target style; accordingly, it attains the highest Style Recognition Accuracy (SRA) ($75.82$ vs.\ $72.42$), confirming stronger style adherence, while text-alignment metrics remain competitive.
The two variants thus offer complementary trade-offs: WIN-O prioritizes text fidelity, while WINRO prioritizes style preservation---both operating solely at the noise level.

\paragraph{Qualitative Results}
We qualitatively compare stylized motions in Fig.~\ref{fig:stylecomparison}.
In the first example, both SMooDi and our method reproduce the ``Rocket'' style, but SMoodi's output drifts semantically, producing a wandering motion instead.
Our framework selects a more suitable initial noise, yielding a coherent ``jogging in place'' motion that respects both style and text.
A similar trend is observed in the second example, where the baseline captures style but loses semantic context, while our method maintains alignment between content and style.

\subsubsection{Controllable Motion Generation.}
Beyond text semantics, practical applications often require motions to satisfy explicit spatial constraints, such as reaching a specific location or avoiding obstacles.
The optimization objective can incorporate such geometric or physical constraints from prior works~\cite{karunratanakul2024dno, liu2024programmable}. Concretely, we augment Eq.~\ref{eq:optimization} with a constraint term $\beta\,\mathcal{L}_{\mathrm{ctrl}}$ (\eg, the waypoint/target-reaching objective of ProgMoGen~\cite{liu2024programmable}). As shown in Fig.~\ref{fig:control}, this combined objective finds an initial noise that satisfies both textual semantics and external spatial constraints, such as reaching target locations.

\begin{figure}[t]
    \centering
    \includegraphics[width=\linewidth]{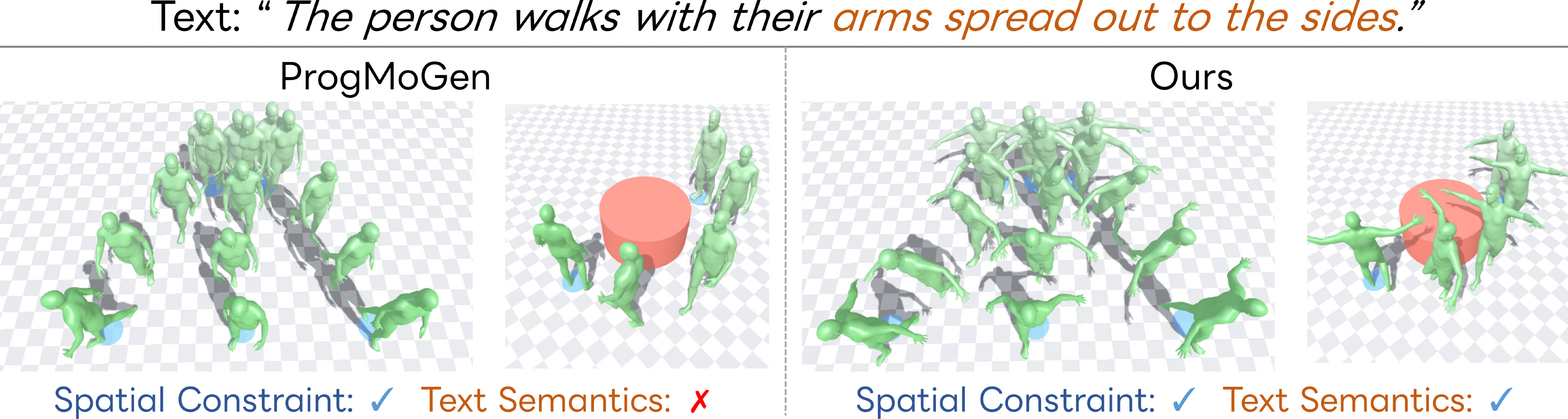}
    \caption{Comparison of motions generated using the initial noise optimized by ProgMoGen~\cite{liu2024programmable} and the refined noise produced by the proposed WINRO method. WINRO produces motions that better satisfy both textual and spatial constraints.}
    \label{fig:control}
\end{figure}

\section{Limitations}
Since WINRO does not modify the base model, it cannot address systematic failure modes of the diffusion backbone and is limited to guiding generation within its learned manifold.
The iterative refinement stage incurs substantial inference overhead, as each optimization step requires a full diffusion generation pass; the LoRA-based refiner mitigates this for latency-sensitive applications, but requires an additional training step.
Finally, the current motion-text retrieval model treats motions holistically, making it difficult to pinpoint fine-grained temporal properties in noise patterns. Investigating retrieval models with temporal localization is a promising direction for future work.

\section{Conclusion}
We present WINRO, a training-free and model-agnostic framework that exploits the semantic structure of the initial noise space to improve diffusion-based text-to-motion generation. By retrieving and refining winning noise tickets, WINRO consistently improves text--motion alignment across MDM and MotionLCM without retraining the base model. WINRO also improves temporal robustness on the MTT benchmark. Its modular design generalizes to motion stylization and spatial constraint satisfaction, demonstrating that principled noise selection is a versatile and complementary tool for controllable motion synthesis.

\section*{Acknowledgements}
This study was carried out using the TSUBAME4.0 supercomputer at Institute of Science Tokyo.

\clearpage
\bibliographystyle{splncs04}
\bibliography{main}

\clearpage
\renewcommand{\thesection}{\Alph{section}}
\renewcommand{\theHsection}{supp.\Alph{section}}
\renewcommand{\theHsubsection}{supp.\Alph{section}.\arabic{subsection}}
\renewcommand{\theHsubsubsection}{supp.\Alph{section}.\arabic{subsection}.\arabic{subsubsection}}
\setcounter{section}{0}

\begin{center}
    \Large\textbf{Supplementary Material}\\[6pt]
    \large\textbf{Retrieving and Refining Winning Noise Tickets for Diffusion-Based Motion Generation}
\end{center}
\vspace{12pt}

\section{Theoretical Perspective on Winning Noise Tickets}
\label{sec:ticket_theory}
We address the question \emph{why certain noises encode motion semantics} through the deterministic (DDIM/ODE) view of diffusion sampling.
The key observation is that deterministic sampling defines a \emph{flow map} from initial noise to a motion trajectory; under mild smoothness assumptions this map is stable, so high semantic alignment cannot occur at isolated points but instead occupies neighborhoods (\ie, semantic \emph{regions}) in the noise space. For motion, where denoising couples all frames, such neighborhoods naturally correspond to coherent action-level trajectories rather than per-frame artifacts. This, in turn, justifies (i) dictionary search over many seeds and (ii) KL-regularized refinement to preserve the Gaussian prior.

\paragraph{Setup.}
For a deterministic sampler, let $\x_0 = G_\theta(\x_T,c)$ denote the generated motion from initial noise $\x_T\in\mathbb{R}^{d}$ ($d{=}FD$) and text condition $c$.
Let $\mathcal{F}_\text{Motion}$ and $\mathcal{F}_\text{Text}$ be a motion--text retrieval model's encoders, and define the normalized embeddings
\begin{equation}
\Phi_c(\x_T)=\frac{\mathcal{F}_\text{Motion}(G_\theta(\x_T,c))}{\lVert \mathcal{F}_\text{Motion}(G_\theta(\x_T,c))\rVert},\quad
\tau(c)=\frac{\mathcal{F}_\text{Text}(c)}{\lVert \mathcal{F}_\text{Text}(c)\rVert}\,.
\end{equation}
We consider the cosine alignment score
\begin{equation}
S_c(\x_T)=\Phi_c(\x_T)^\top \tau(c)\in[-1,1]\,.
\end{equation}
In particular, our retrieval dictionary (Sec.~\ref{sec:stage_1}) uses the \emph{null} score $\tilde{S}_c(\x_T)=\Phi_{\varnothing}(\x_T)^\top\tau(c)$, which sets only the diffusion condition to $\varnothing$ while retaining the text query $c$.

\begin{definition}[Winning region]
For a prompt $c$ and threshold $\gamma$, define the super-level set
$\mathcal{W}_{c,\gamma}=\{\x_T:\tilde{S}_c(\x_T)\ge\gamma\}$.
We call $\x_T\in\mathcal{W}_{c,\gamma}$ a \emph{winning noise ticket} (at level $\gamma$) for $c$.
\end{definition}

\begin{lemma}[Stability of deterministic sampling]
\label{lem:ode_stability}
View DDIM sampling as solving a probability-flow ODE $\dot{\x}=f_\theta(\x,t,c)$ from $t{=}1$ to $t{=}0$ with terminal condition $\x(1)=\x_T$ and output $\x(0)=G_\theta(\x_T,c)$.
If $f_\theta(\cdot,t,c)$ is $L$-Lipschitz in $\x$ for all $t\in[0,1]$, then for any $\x_T,\x'_T$,
\begin{equation}
\lVert G_\theta(\x_T,c)-G_\theta(\x'_T,c)\rVert \le e^{L}\lVert \x_T-\x'_T\rVert\,.
\end{equation}
\end{lemma}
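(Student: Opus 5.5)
We argue by a Grönwall-type comparison between the two trajectories of the probability-flow ODE. Fix the condition $c$ and two terminal noises $\x_T,\x'_T$. Let $\x(t)$ and $\x'(t)$ be the corresponding solutions of $\dot{\x}=f_\theta(\x,t,c)$ on $[0,1]$, with $\x(1)=\x_T$ and $\x'(1)=\x'_T$. Existence and uniqueness on the whole interval follow from the global Lipschitz hypothesis via Picard--Lindelöf, assuming $f_\theta$ is measurable, or continuous, in $t$. In particular $G_\theta(\cdot,c)$ is well defined as the time-$0$ value.

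Define the error $\mathbf{e}(t)=\x(t)-\x'(t)$. Integrating backward from the terminal time, for $t\in[0,1]$ we have
\begin{equation}
\mathbf{e}(t)=\mathbf{e}(1)-\int_t^1\bigl(f_\theta(\x(s),s,c)-f_\theta(\x'(s),s,c)\bigr)\,ds .
\end{equation}
Taking norms and applying the $L$-Lipschitz bound gives
\begin{equation}
u(t)\le u(1)+L\int_t^1 u(s)\,ds, \qquad u(t)=\lVert\mathbf{e}(t)\rVert .
\end{equation}

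To apply Grönwall's inequality with the usual orientation, reverse time by setting $\tau=1-t$ and $v(\tau)=u(1-\tau)$. The bound becomes $v(\tau)\le v(0)+L\int_0^\tau v(r)\,dr$. The integral form of Grönwall's lemma then yields $v(\tau)\le v(0)e^{L\tau}$. At $\tau=1$ this is
\begin{equation}
\lVert G_\theta(\x_T,c)-G_\theta(\x'_T,c)\rVert=u(0)\le e^{L}\lVert\x_T-\x'_T\rVert ,
\end{equation}
which is the claim. A differential version works as well: $\frac{d}{d\tau}\lVert\mathbf{e}\rVert^2\le 2L\lVert\mathbf{e}\rVert^2$, which gives the same exponential bound.

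The only real obstacle is interpretive rather than technical. DDIM is a discrete scheme, whereas the lemma is phrased for the exact ODE flow. I would state the result for the exact flow, as the lemma does.

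For the discrete sampler, a parallel argument applies. Each update $f_t$ in Eq.~(\ref{eq:generation}) is an Euler-type step $\x\mapsto\x+h_t f_\theta(\x,t,c)$, so it has Lipschitz constant at most $1+h_tL\le e^{h_tL}$. Composing the steps with $\sum_t h_t=1$ gives the same bound $e^{L}$ through the product $\prod_t e^{h_tL}=e^{L}$.
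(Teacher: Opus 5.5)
Your proof is correct and uses the same argument as the paper: a Gr\"onwall estimate on the difference $\x(t)-\x'(t)$ of the two trajectories, carried from the terminal time $t=1$ back to $t=0$. Your explicit time reversal $\tau=1-t$ (or the $\lVert\mathbf{e}\rVert^2$ computation) is somewhat more careful than the paper's one-line version. The paper's stated inequality $\frac{d}{dt}\lVert\Delta(t)\rVert\le L\lVert\Delta(t)\rVert$ by itself only bounds $\lVert\Delta(1)\rVert$ by $e^{L}\lVert\Delta(0)\rVert$; the backward bound needs $-\frac{d}{dt}\lVert\Delta\rVert\le L\lVert\Delta\rVert$, which holds for the same reason. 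Your discrete-DDIM addendum is an optional heuristic that the lemma does not need, since actual DDIM updates are Euler steps only after rescaling $\x$ and time.
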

\begin{proof}
Let $\x(t)$ and $\x'(t)$ be the ODE solutions with terminal values $\x_T$ and $\x'_T$, and define $\Delta(t)=\x(t)-\x'(t)$.
By Lipschitzness, $\frac{d}{dt}\lVert \Delta(t)\rVert \le \lVert \dot{\Delta}(t)\rVert \le L\lVert \Delta(t)\rVert$.
Grönwall's inequality gives $\lVert \Delta(0)\rVert \le e^{L}\lVert \Delta(1)\rVert$, yielding the claim. \qed
\end{proof}

\begin{proposition}[Tickets form semantic regions]
\label{prop:ticket_region}
Assume $\mathcal{F}_\text{Motion}$ is $L_F$-Lipschitz and its output norm is bounded away from $0$ on the null-prompt generations of interest, so that $\Phi_{\varnothing}$ is well-defined and locally Lipschitz.
Then $\tilde{S}_c$ is locally Lipschitz in $\x_T$.
In particular, if $\tilde{S}_c(\x_T^\star)\ge \gamma$ for some $\x_T^\star$, there exists $r>0$ such that every $\x_T$ in the ball $\{\lVert \x_T-\x_T^\star\rVert\le r\}$ satisfies $\tilde{S}_c(\x_T)\ge \gamma/2$.
\end{proposition}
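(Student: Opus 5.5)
The plan is to compose Lipschitz estimates along the chain $\x_T \mapsto G_\theta(\x_T,\varnothing) \mapsto \mathcal{F}_\text{Motion}(\cdot) \mapsto$ normalization $\mapsto$ inner product with $\tau(c)$. Then I choose the radius $r$ from the resulting local Lipschitz constant of $\tilde{S}_c$. Lemma~\ref{lem:ode_stability} is used with $c=\varnothing$ throughout. This implicitly assumes the probability-flow field $f_\theta(\cdot,t,\varnothing)$ is $L$-Lipschitz, and I would state that assumption explicitly.

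\textbf{Step 1 (Lipschitz through the unnormalized chain).} By Lemma~\ref{lem:ode_stability}, $\lVert G_\theta(\x_T,\varnothing)-G_\theta(\x'_T,\varnothing)\rVert\le e^{L}\lVert\x_T-\x'_T\rVert$. Composing with the $L_F$-Lipschitz encoder shows that $u(\x_T):=\mathcal{F}_\text{Motion}(G_\theta(\x_T,\varnothing))$ is $L_Fe^{L}$-Lipschitz.

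\textbf{Step 2 (normalization).} This is the step I expect to be the only delicate one, since $u\mapsto u/\lVert u\rVert$ is not globally Lipschitz. Suppose $\lVert u(\x_T^\star)\rVert = m>0$, which holds by the bounded-away-from-zero hypothesis. Step~1 then gives $\lVert u(\x_T)\rVert\ge m/2$ whenever $\lVert\x_T-\x_T^\star\rVert\le r_0:=m/(2L_Fe^{L})$. On the set $\{\lVert u\rVert\ge m/2\}$ the elementary bound
\[
\Bigl\lVert \tfrac{u}{\lVert u\rVert}-\tfrac{v}{\lVert v\rVert}\Bigr\rVert\le \tfrac{2\lVert u-v\rVert}{\max(\lVert u\rVert,\lVert v\rVert)}
\]
holds; it follows from adding and subtracting $v/\lVert u\rVert$ and using the triangle inequality. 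Hence $\Phi_\varnothing$ is $K$-Lipschitz on the ball $B(\x_T^\star,r_0)$, with $K:=4L_Fe^{L}/m$. This is exactly the local Lipschitz property claimed in the proposition.

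\textbf{Step 3 (score).} Since $\tau(c)$ is a unit vector, Cauchy--Schwarz gives
\[
|\tilde{S}_c(\x_T)-\tilde{S}_c(\x_T^\star)|=|(\Phi_\varnothing(\x_T)-\Phi_\varnothing(\x_T^\star))^\top\tau(c)|\le K\lVert\x_T-\x_T^\star\rVert
\]
on $B(\x_T^\star,r_0)$. So $\tilde{S}_c$ is locally Lipschitz.

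\textbf{Step 4 (choice of $r$).} Take $r=\min\bigl(r_0,\ \gamma/(2K)\bigr)$. For $\lVert\x_T-\x_T^\star\rVert\le r$ we get $\tilde{S}_c(\x_T)\ge\tilde{S}_c(\x_T^\star)-\gamma/2\ge\gamma/2$.

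This last step needs $\gamma>0$. If $\gamma\le 0$ then $\gamma/2\ge\gamma$, and the conclusion can fail when $\tilde{S}_c(\x_T^\star)=\gamma$ exactly. I would therefore state $\gamma>0$ as an assumption; it is the meaningful regime for winning tickets. Alternatively, one can replace $\gamma/2$ by $\gamma-\epsilon$ for any $\epsilon>0$ and take $r=\min(r_0,\epsilon/K)$, which works for every $\gamma$.
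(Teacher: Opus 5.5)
Your proposal is correct and follows the paper's proof. Both arguments apply Lemma~\ref{lem:ode_stability} with $c=\varnothing$ together with the $L_F$-Lipschitz encoder, use that normalization is locally Lipschitz away from $0$, apply Cauchy--Schwarz against the unit vector $\tau(c)$, and shrink $r$ until the deviation is at most $\gamma/2$. Your explicit constants ($r_0$, $K=4L_Fe^{L}/m$) and your remarks that the proof needs the Lipschitz hypothesis on $f_\theta(\cdot,t,\varnothing)$ and the condition $\gamma>0$ simply make explicit what the paper's proof leaves implicit.
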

\begin{proof}
By Lem.~\ref{lem:ode_stability} (with $c{=}\varnothing$) and Lipschitzness of $\mathcal{F}_\text{Motion}$, the unnormalized feature map $\x_T\mapsto \mathcal{F}_\text{Motion}(G_\theta(\x_T,\varnothing))$ is locally Lipschitz.
Since normalization is locally Lipschitz away from $0$, $\Phi_{\varnothing}$ is locally Lipschitz, and
\begin{equation}
\big|\tilde{S}_c(\x_T)-\tilde{S}_c(\x'_T)\big|
=\big|(\Phi_{\varnothing}(\x_T)-\Phi_{\varnothing}(\x'_T))^\top\tau(c)\big|
\le \lVert \Phi_{\varnothing}(\x_T)-\Phi_{\varnothing}(\x'_T)\rVert\,.
\end{equation}
Choosing $r$ small enough so that $\big|\tilde{S}_c(\x_T)-\tilde{S}_c(\x_T^\star)\big|\le \gamma/2$ for all $\lVert \x_T-\x_T^\star\rVert\le r$ yields the result. \qed
\end{proof}

Prop.~\ref{prop:ticket_region} offers a concrete mechanism: once a seed yields a semantically aligned null-prompt motion, nearby seeds tend to yield \emph{similar} motions and thus inherit the semantic bias, which matches our empirical locality observations.

\begin{proposition}[Dictionary size and diminishing returns]
\label{prop:best_of_n}
Let $\{\x_T^{(i)}\}_{i=1}^N$ be i.i.d.\ samples from $\mathcal{N}(\mathbf{0},\mathbf{I})$.
Define $p_{c,\gamma}=\mathbb{P}(\x_T\in\mathcal{W}_{c,\gamma})$.
Then the probability that at least one sampled seed is a winning ticket is
\begin{equation}
\mathbb{P}\!\left(\max_{i\in[N]} \tilde{S}_c(\x_T^{(i)})\ge \gamma\right)
= 1-(1-p_{c,\gamma})^N\,,
\end{equation}
which increases with $N$ but saturates as $N$ grows.
\end{proposition}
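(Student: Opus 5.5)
The plan is to pass to the complementary event and use independence. The event $\{\max_{i\in[N]}\tilde{S}_c(\x_T^{(i)})\ge\gamma\}$ is exactly the event that some index $i$ has $\x_T^{(i)}\in\mathcal{W}_{c,\gamma}$. Its complement is therefore $\bigcap_{i=1}^N\{\x_T^{(i)}\notin\mathcal{W}_{c,\gamma}\}$.

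First, I need $\mathcal{W}_{c,\gamma}$ to be measurable, so that $p_{c,\gamma}$ is well defined. Under the hypotheses of Prop.~\ref{prop:ticket_region}, $\tilde{S}_c$ is (locally Lipschitz, hence) continuous. The super-level set $\mathcal{W}_{c,\gamma}=\tilde{S}_c^{-1}([\gamma,\infty))$ is then closed and thus Borel. Without those hypotheses I would simply assume $\tilde{S}_c$ is Borel measurable, which suffices. This is the only point needing any care, and it is mild.

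Next, the $\x_T^{(i)}$ are i.i.d. $\mathcal{N}(\mathbf{0},\mathbf{I})$. The indicator events $\{\x_T^{(i)}\notin\mathcal{W}_{c,\gamma}\}$ are therefore independent, each with probability $1-p_{c,\gamma}$. Hence
\begin{equation}
\mathbb{P}\!\left(\max_{i\in[N]}\tilde{S}_c(\x_T^{(i)})<\gamma\right)=\prod_{i=1}^N\bigl(1-p_{c,\gamma}\bigr)=(1-p_{c,\gamma})^N,
\end{equation}
and subtracting from $1$ gives the claimed formula. Note that the maximum over finitely many values is attained, so "$\max\ge\gamma$" and "some $i$ has $\tilde{S}_c\ge\gamma$" coincide.

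Finally, for the qualitative claims, set $q=1-p_{c,\gamma}\in[0,1]$ and $g(N)=1-q^N$.

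\textbf{Monotonicity.} The increment is $g(N+1)-g(N)=q^N(1-q)=p_{c,\gamma}(1-p_{c,\gamma})^N\ge0$, so $g$ is nondecreasing. It is strictly increasing when $0<p_{c,\gamma}<1$.

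\textbf{Diminishing returns.} These increments decay geometrically in $N$. Moreover $g(N)\to1$ whenever $p_{c,\gamma}>0$, so the curve saturates.

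It is also useful to remark that $g(N)\approx1-e^{-Np_{c,\gamma}}$. This shows that $N$ of order $1/p_{c,\gamma}$ is needed, connecting to the prompt-dependent sparsity seen in Fig.~\ref{fig:noise-analysis-fixprompt}.
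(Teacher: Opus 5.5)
Your proposal is correct and follows the paper's proof: pass to the complement, use independence to get probability $(1-p_{c,\gamma})^N$ that no seed lands in $\mathcal{W}_{c,\gamma}$, and subtract from one. Your additions make explicit what the paper leaves informal: measurability of $\mathcal{W}_{c,\gamma}$, the increment $p_{c,\gamma}(1-p_{c,\gamma})^N$ showing monotonicity, and convergence to $1$ showing saturation.
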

\begin{proof}
By independence, the event that no seed lands in $\mathcal{W}_{c,\gamma}$ has probability $(1-p_{c,\gamma})^N$; take the complement. \qed
\end{proof}

\begin{proposition}[Why null-prompt retrieval transfers to text prompts]
\label{prop:null_to_text}
Assume the conditioning-induced feature shift is bounded in the retrieval space:
there exists $\delta(c)\ge 0$ such that for all $\x_T$, $\lVert \Phi_c(\x_T)-\Phi_{\varnothing}(\x_T)\rVert \le \delta(c)$.
Then for all $\x_T$, the conditional score satisfies
\begin{equation}
\big|S_c(\x_T)-\tilde{S}_c(\x_T)\big| \le \delta(c)\,.
\end{equation}
Consequently, any seed with $\tilde{S}_c(\x_T)\ge \gamma+\delta(c)$ also satisfies $S_c(\x_T)\ge \gamma$.
\end{proposition}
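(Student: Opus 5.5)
The argument is a direct Cauchy--Schwarz estimate, parallel to the bound used in the proof of Prop.~\ref{prop:ticket_region}. The only difference is that the two embeddings being compared now differ in the diffusion condition rather than in the seed. By definition, $S_c(\x_T)=\Phi_c(\x_T)^\top\tau(c)$ and $\tilde{S}_c(\x_T)=\Phi_{\varnothing}(\x_T)^\top\tau(c)$ share the same text vector $\tau(c)$. Subtracting them therefore gives
\begin{equation}
S_c(\x_T)-\tilde{S}_c(\x_T)=\big(\Phi_c(\x_T)-\Phi_{\varnothing}(\x_T)\big)^\top\tau(c)\,.
\end{equation}

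Next I apply Cauchy--Schwarz. The text embedding $\tau(c)$ is normalized, so $\lVert\tau(c)\rVert=1$. Here I assume $\mathcal{F}_\text{Text}(c)\neq 0$, as the definitions in the Setup already implicitly require. This yields $|S_c(\x_T)-\tilde{S}_c(\x_T)|\le\lVert\Phi_c(\x_T)-\Phi_{\varnothing}(\x_T)\rVert$. The assumed uniform bound then gives at most $\delta(c)$, and this holds for every $\x_T$ because the hypothesis is uniform in $\x_T$. No Lipschitz or ODE stability argument (Lem.~\ref{lem:ode_stability}) is needed.

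For the consequence, I use the one-sided form of the bound, $S_c(\x_T)\ge\tilde{S}_c(\x_T)-\delta(c)$. If $\tilde{S}_c(\x_T)\ge\gamma+\delta(c)$, this gives $S_c(\x_T)\ge\gamma$ immediately. Equivalently, $\mathcal{W}_{c,\gamma+\delta(c)}$ is contained in the conditional super-level set $\{S_c\ge\gamma\}$.

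There is no real obstacle. The only point needing care is that $\Phi_c$ and $\Phi_{\varnothing}$ are well defined, meaning the motion features have nonzero norm. This is already part of the standing assumption in Prop.~\ref{prop:ticket_region}, and I would restate it here as well. As an optional remark, I would combine the result with Prop.~\ref{prop:best_of_n}: at least one of $N$ dictionary seeds is a conditional winner with probability at least $1-(1-p_{c,\gamma+\delta(c)})^N$.
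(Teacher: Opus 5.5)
Your proof is correct and matches the paper's argument: write $S_c(\x_T)-\tilde{S}_c(\x_T)=(\Phi_c(\x_T)-\Phi_{\varnothing}(\x_T))^\top\tau(c)$, bound it by Cauchy--Schwarz using $\lVert\tau(c)\rVert=1$ and the uniform bound $\delta(c)$, and read off the consequence from $S_c(\x_T)\ge\tilde{S}_c(\x_T)-\delta(c)$. Your side remarks, on nonzero feature norms and on combining the result with the best-of-$N$ bound via $p_{c,\gamma+\delta(c)}$, are sound but not needed for the statement.
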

\begin{proof}
Since $\tau(c)$ is unit-norm,
\begin{equation}
\begin{split}
|S_c(\x_T)-\tilde{S}_c(\x_T)|
&=|(\Phi_c(\x_T)-\Phi_{\varnothing}(\x_T))^\top\tau(c)| \\
&\le \lVert \Phi_c(\x_T)-\Phi_{\varnothing}(\x_T)\rVert \le \delta(c)\,.
\end{split}
\end{equation}
The second claim follows immediately. \qed
\end{proof}

Prop.~\ref{prop:null_to_text} explains why our dictionary, built from null-prompt generations, can still yield strong initializations for text-conditioned sampling: if conditioning acts as a bounded perturbation in the retrieval embedding space, then high intrinsic (null) alignment implies high conditional alignment.
This assumption is also consistent with classifier-free guidance, where each denoising update explicitly mixes conditional and unconditional predictions, making the null trajectory a persistent component of the final motion.

\begin{proposition}[KL regularization as a trust region]
\label{prop:kl_trust_region}
Consider maximizing alignment under a moment-based Gaussian prior constraint:
\begin{equation}
\max_{\x_T}\ S_c(\x_T)\quad \text{s.t.}\quad
D_{KL}\!\left(\mathcal{N}(\mu(\x_T), \sigma^2(\x_T)) \| \mathcal{N}(0, 1)\right)\le \varepsilon,
\end{equation}
where $\mu(\x_T)$ and $\sigma^2(\x_T)$ are the empirical mean and variance of the noise tensor.
The objective in Eq.~\ref{eq:optimization} is the Lagrangian relaxation of this constrained problem (for some multiplier $\lambda$), encouraging alignment improvement while keeping the optimized seed close to the Gaussian prior.
\end{proposition}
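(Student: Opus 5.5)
The plan is to treat the statement as a standard Lagrangian-duality claim for the constrained problem, with $\mathcal{L}_{sim}=1-S_c$. Write $K(\x_T)=D_{KL}(\mathcal{N}(\mu(\x_T),\sigma^2(\x_T))\|\mathcal{N}(0,1))=\tfrac12\bigl(\sigma^2+\mu^2-1-\log\sigma^2\bigr)$. Maximizing $S_c$ subject to $K\le\varepsilon$ is then equivalent to minimizing $\mathcal{L}_{sim}$ subject to $K-\varepsilon\le0$, since the constant $1$ does not affect the argmin.

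\textbf{Step 1: the Lagrangian.} The Lagrangian is $\mathcal{L}_{sim}(\x_T)+\lambda\,(K(\x_T)-\varepsilon)$ with $\lambda\ge0$. For fixed $\lambda$, the term $-\lambda\varepsilon$ is constant. Minimizing the Lagrangian over $\x_T$ is therefore exactly minimizing Eq.~\ref{eq:optimization}, which proves the ``Lagrangian relaxation'' identification directly.

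\textbf{Step 2: the correspondence between $\lambda$ and $\varepsilon$.} I would give this in the direction that needs no convexity, namely the penalty-to-constraint direction. Let $\x^\lambda$ minimize Eq.~\ref{eq:optimization}. Set $\varepsilon:=K(\x^\lambda)$. For any feasible $\x_T$ with $K(\x_T)\le\varepsilon$, optimality of $\x^\lambda$ gives
\[
\mathcal{L}_{sim}(\x^\lambda)+\lambda K(\x^\lambda)\le \mathcal{L}_{sim}(\x_T)+\lambda K(\x_T)\le \mathcal{L}_{sim}(\x_T)+\lambda\varepsilon .
\]
Hence $S_c(\x^\lambda)\ge S_c(\x_T)$. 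So each minimizer of the penalized objective solves the constrained problem at level $\varepsilon=K(\x^\lambda)$ (Everett's theorem).

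I would add a monotonicity remark. Comparing optimality of $\x^{\lambda_1}$ and $\x^{\lambda_2}$ and summing the two inequalities gives $(\lambda_1-\lambda_2)\bigl(K(\x^{\lambda_1})-K(\x^{\lambda_2})\bigr)\le0$. So larger $\lambda$ yields a smaller trust radius $\varepsilon$, and the penalty does keep the seed close to the prior.

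\textbf{Step 3: the converse direction.} Here the claim is that for a given $\varepsilon$ some $\lambda$ exists. This is the main obstacle, because $S_c$ passes through the nonconvex sampler $G_\theta$, so strong duality generally fails. I would state it only locally. Assume a local constrained optimum $\x^\star$ with $\nabla K(\x^\star)\ne0$ when the constraint is active; this linear-independence constraint qualification is the only nontrivial case. The KKT theorem then supplies $\lambda\ge0$ with $\nabla\mathcal{L}_{sim}(\x^\star)+\lambda\nabla K(\x^\star)=0$, so $\x^\star$ is a stationary point of Eq.~\ref{eq:optimization}.

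Differentiability of $S_c$ follows from the smoothness assumptions behind Lem.~\ref{lem:ode_stability} and Prop.~\ref{prop:ticket_region}, together with differentiable encoders. The constraint qualification can be checked directly. Writing $d$ for the number of entries and $\mathbf 1$ for the all-ones vector, $\nabla K=\frac1d\bigl[\mu\mathbf 1+(1-\sigma^{-2})(\x_T-\mu\mathbf 1)\bigr]$. This vanishes only when $\mu=0$ and $\sigma=1$, that is when $K=0$, so it is nonzero whenever $\varepsilon>0$ and the constraint is active.

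I would phrase the proposition's ``for some multiplier $\lambda$'' in exactly this sense. That is, the first-order (KKT) correspondence holds in general, while exact equivalence is guaranteed in the penalty-to-constraint direction of Step 2.
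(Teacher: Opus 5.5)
Your Step 1 is exactly the paper's proof. The paper introduces $\lambda\ge 0$, notes that $-S_c(\x_T)+\lambda D_{KL}$ matches Eq.~\ref{eq:optimization} up to additive constants (the $1$ in $\mathcal{L}_{sim}$ and the $-\lambda\varepsilon$), and stops there. The paper therefore establishes only this formal identification.

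Your Steps 2 and 3 go beyond the paper, and they are correct. What they buy is an actual meaning for the phrase ``for some multiplier $\lambda$''.
\begin{itemize}
\item The Everett argument shows that any minimizer $\x^\lambda$ of the penalized objective solves the constrained problem at radius $\varepsilon=K(\x^\lambda)$.
\item The summed-optimality inequality gives the monotone $\lambda\mapsto\varepsilon$ trade-off, which is what justifies the trust-region reading.
\item The KKT step gives the converse only at first order. That is the honest statement, since $S_c$ passes through a nonconvex sampler and strong duality cannot be assumed.
\end{itemize}
Your gradient of $K$ is right, and so is the resulting constraint qualification: $\mathbf{1}$ is orthogonal to $\x_T-\mu\mathbf{1}$, so $\nabla K=0$ forces $\mu=0$ and $\sigma^2=1$.

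Two small points. First, Lem.~\ref{lem:ode_stability} assumes only a Lipschitz vector field. That gives a Lipschitz flow map, not a differentiable one, so for KKT you should assume $f_\theta$ is $C^1$ in $\x$. Alternatively, note that the discrete DDIM sampler is a finite composition of differentiable network evaluations, as the paper's algorithm presumes. Second, Step 2 is non-vacuous for $\lambda>0$, because a minimizer exists. With the biased variance, $\mu^2+\sigma^2=\|\x_T\|^2/d$, and $K\to\infty$ as $\sigma^2\to 0$, so $K$ has compact sublevel sets. Since $\mathcal{L}_{sim}$ is continuous and lies in $[0,2]$, the penalized objective attains its minimum.
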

\begin{proof}
This follows from standard Lagrangian relaxation: introducing $\lambda\ge 0$ yields the unconstrained objective
$-S_c(\x_T)+\lambda\,D_{KL}(\cdot)$ up to an additive constant, which matches Eq.~\ref{eq:optimization} after converting to a minimization problem. \qed
\end{proof}

\begin{proposition}[KL controls mean/variance drift]
\label{prop:kl_moment}
Let $\mu$ and $\sigma^2$ be the empirical mean and variance of a noise tensor.
Then
\begin{equation}
D_{KL}\!\left(\mathcal{N}(\mu, \sigma^2) \| \mathcal{N}(0, 1)\right)
= \frac{1}{2}\!\left(\mu^2 + \sigma^2 - 1 - \log \sigma^2\right)
\ge \frac{1}{2}\mu^2 + \frac{(\sigma^2-1)^2}{4(1+\sigma^2)}\,,
\end{equation}
so a small KL penalty forces both $\mu\!\approx\!0$ and $\sigma^2\!\approx\!1$.
\end{proposition}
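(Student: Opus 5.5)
The plan is to treat the equality and the inequality separately. The inequality is then reduced to a one-variable calculus check in $s=\sigma^2$.

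\textbf{Step 1: the closed form.} I would use the standard formula for the KL divergence between univariate Gaussians,
\[
D_{KL}\big(\mathcal{N}(\mu,\sigma^2)\,\|\,\mathcal{N}(0,1)\big)=\tfrac12\big(\mu^2+\sigma^2-1-\log\sigma^2\big).
\]
It follows by writing the divergence as $\mathbb{E}_{\mathcal{N}(\mu,\sigma^2)}[\log p-\log q]$ and substituting $\mathbb{E}[x^2]=\mu^2+\sigma^2$. This is routine.

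\textbf{Step 2: split the inequality.} The term $\tfrac12\mu^2$ appears on both sides, so it cancels. It therefore suffices to show, for all $s>0$,
\[
h(s):=s-1-\log s-\frac{(s-1)^2}{2(1+s)}\ \ge 0 .
\]

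\textbf{Step 3: the one-variable inequality (main obstacle).} First, $h(1)=0$. Differentiating, with $g(s)=(s-1)^2/(2(1+s))$ one gets
\[
g'(s)=\frac{(s-1)(s+3)}{2(1+s)^2}.
\]
Hence
\[
h'(s)=\frac{s-1}{s}-\frac{(s-1)(s+3)}{2(1+s)^2}=\frac{(s-1)\big(2(1+s)^2-s(s+3)\big)}{2s(1+s)^2}=\frac{(s-1)(s^2+s+2)}{2s(1+s)^2}.
\]
Since $s^2+s+2>0$, the sign of $h'$ is the sign of $s-1$. So $h$ decreases on $(0,1]$, increases on $[1,\infty)$, and attains its global minimum $h(1)=0$. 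This gives $h\ge 0$. The only care needed is the algebra in $h'$; the factorization above is where I expect errors could creep in.

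\textbf{Step 4: the concluding claim.} Both terms on the right-hand side are nonnegative. So if the KL value is at most $\varepsilon$, then $\mu^2\le 2\varepsilon$ and $(\sigma^2-1)^2\le 4\varepsilon(1+\sigma^2)$. The second bound forces $\sigma^2$ to be bounded, since the left side grows quadratically and the right linearly. It then gives $|\sigma^2-1|=O(\sqrt{\varepsilon})$. Therefore a small KL penalty forces $\mu\approx 0$ and $\sigma^2\approx 1$, as claimed.
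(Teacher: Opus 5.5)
Your proof is correct and follows the paper's route. The paper also cancels $\tfrac12\mu^2$, reduces to $x-1-\log x\ge \frac{(x-1)^2}{2(1+x)}$ for $x>0$, and uses the same auxiliary function. It differs only in showing $\phi(1)=\phi'(1)=0$ with $\phi''(x)=\frac{x^3-x^2+3x+1}{x^2(1+x)^3}\ge 0$, i.e.\ convexity, where you instead factor $h'(s)=\frac{(s-1)(s^2+s+2)}{2s(1+s)^2}$ directly. Your Step~4, which turns the bound into $\mu^2\le 2\varepsilon$ and $|\sigma^2-1|=O(\sqrt{\varepsilon})$, is a quantitative version of the concluding remark that the paper leaves informal.
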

\begin{proof}
The closed form is the standard KL divergence between univariate Gaussians.
For the bound, it suffices to show that for all $x>0$,
\begin{equation}
x-1-\log x \ge \frac{(x-1)^2}{2(1+x)}\,.
\end{equation}
Let $\phi(x)=x-1-\log x-\frac{(x-1)^2}{2(1+x)}$.
We have $\phi(1)=\phi'(1)=0$ and
\begin{equation}
\phi''(x)=\frac{1}{x^2}-\frac{4}{(1+x)^3}
=\frac{x^3-x^2+3x+1}{x^2(1+x)^3}\ge 0\,,
\end{equation}
so $\phi$ is convex and minimized at $x{=}1$, implying $\phi(x)\ge 0$.
Substituting $x=\sigma^2$ completes the proof. \qed
\end{proof}

\section{Implementation Details}
\label{sec:supp_setup}

\subsection{Base Models and Sampling}
\label{sec:supp_base}
We adopt MDM-50step~\cite{tevet2023human} and MotionLCM-V2~\cite{motionlcm} as base generators.
For MDM, we use DDIM sampling with 50 steps for both text and null prompts.
For MotionLCM, we adopt the 1-step inference setting.
Classifier-free guidance is applied in the same manner as in the original implementations.

\subsection{Noise Dictionary and Retrieval}
\label{sec:supp_dict}
To construct the noise dictionary used in Stage~1, we sample 10{,}000 noise tensors from the standard Gaussian distribution.
To reduce the retrieval cost, we create truncated versions of each noise at 4-frame intervals, and use motions truncated to \(40, 44, \dots, 196\) frames for Frame-Adjusted Retrieval (FAR).
For motion--text retrieval in both stages, we use TMR~\cite{petrovich2023tmr}, which encodes motion and text tokens into 256-dimensional feature vectors.
All experiments use the official pretrained TMR weights.

\subsection{Optimization Setup}
\label{sec:supp_opt}
Given the retrieved initial noise, we optimize it using the objective described in the main paper.
The balancing weight \(\lambda\) scales with the dimensionality of the noise space. On HumanML3D, we set \(\lambda{=}5000\) for MDM, which operates in the data space, and \(\lambda{=}250\) for MotionLCM, which uses a low-dimensional latent space. For the MTT benchmark, we increase the MDM weight to \(\lambda{=}50{,}000\) to account for the longer generated sequences and correspondingly higher-dimensional noise.
We use the Adam optimizer with a learning rate of \(0.01\) for MDM and \(0.05\) for MotionLCM, and perform up to 100 gradient steps per sample.
The optimization accepts either random Gaussian noise (WIN-O; Sec.~\ref{sec:supp_variants}) or a seed retrieved by WIN-FAR (WINRO; Table~\ref{tab:win_far_nulltext}) as input.

\subsection{LoRA-Based Noise Refiner}
\label{sec:supp_lora}

The LoRA-based noise refiner amortizes the test-time optimization of WINRO into a single forward pass.
Given an initial noise sample $\x_T$, the refiner $R_\phi$ predicts a residual $\Delta\x_T$ so that the refined seed $\hat{\x}_T = \x_T + \Delta\x_T$ lies in a region of noise space that yields higher text--motion alignment.

\paragraph{Architecture.}
We construct $R_\phi$ by attaching rank-64 LoRA modules~\cite{hu2022lora} with $\alpha{=}64$ (scaling factor $\alpha/r{=}1$) to a copy of the pretrained generator $G_\theta$.
Adapters are injected into all query, key, value, and output projections in every attention layer, all feed-forward layers, and the final output projection head.
To ensure that the network output is purely the learned residual $\Delta\x_T$ rather than a denoised prediction, we zero out the base weights and biases of the output head before training.
Combined with the standard zero-initialization of the LoRA output matrices~\cite{hu2022lora}, this guarantees $R_\phi(\x_T) = \mathbf{0}$ at initialization, so that training begins from the identity mapping in noise space.

\paragraph{Delta computation.}
The refiner performs a \emph{single} forward pass of the LoRA-augmented denoiser at a fixed anchor timestep $t_0$, set to the first step of the diffusion schedule (i.e.\ the maximum noise level):
\begin{equation}
  \Delta\x_T = R_\phi(\x_T,\, t_0,\, c),
  \label{eq:delta}
\end{equation}
where $c$ is the text conditioning.
No reverse-diffusion iterations are executed; the single-pass output is directly used as the residual.

\paragraph{Training.}
We freeze the base generator $G_\theta$ and optimize only the LoRA parameters $\phi$ end to end.
Each training iteration proceeds as follows:
(i)~sample $\x_T \sim \mathcal{N}(\mathbf{0}, \mathbf{I})$;
(ii)~compute $\Delta\x_T$ via Eq.~\eqref{eq:delta};
(iii)~run the \emph{full} reverse-diffusion process of the frozen $G_\theta$ from $\hat{\x}_T = \x_T + \Delta\x_T$ to obtain a generated motion $\x_0$;
(iv)~evaluate text--motion alignment via TMR~\cite{petrovich2023tmr}.
Gradients flow from the TMR similarity through the entire generation pipeline back to the LoRA parameters.
Because $\x_T$ is drawn from the same Gaussian prior that the retrieval dictionary is built from, no explicit supervision pairs $(\x_T^\dagger, \x_T^*)$ are required.

The training objective is:
\begin{equation}
  \mathcal{L}(\phi)
  = 1 - s(\mathcal{F}_\text{Motion}(\x_0),\, c)
    + \lambda \,\lVert \Delta\x_T \rVert_2^2,
  \label{eq:lora_loss}
\end{equation}
where $s(\cdot, c)$ is the cosine similarity defined in the main paper and $\lambda{=}0.5$ controls the strength of the $\ell_2$ regularization on the residual, following Eyring et al.~\cite{eyring2025noise}.

Table~\ref{tab:lora_hparams} summarizes the training hyperparameters.
Both configurations use AdamW with learning rate $1 \times 10^{-4}$ and gradient clipping at norm~$1.0$, trained for 25 epochs on the HumanML3D training split.
\begin{table}[h]
  \centering
  \caption{LoRA noise refiner training hyperparameters.}
  \label{tab:lora_hparams}
  \small
  \begin{tabular}{lcc}
    \toprule
                          & MDM   & MotionLCM \\
    \midrule
    LoRA rank $r$         & 64    & 64        \\
    LoRA $\alpha$         & 64    & 64        \\
    Batch size            & 32    & 64        \\
    Learning rate         & $10^{-4}$ & $10^{-4}$ \\
    Regularization $\lambda$ & 0.5 & 0.5       \\
    Gradient clip norm    & 1.0   & 1.0       \\
    Training Epochs       & 25    & 25        \\
    \bottomrule
  \end{tabular}
\end{table}

\paragraph{Inference.}
At test time, the refiner accepts either random Gaussian noise or the output of WIN-FAR retrieval as input, since both are drawn from the same prior.
When applied to random seeds without retrieval (WIN-LoRA; see Table~\ref{tab:winro_full_variants}), the refiner moves each seed toward a winning-ticket region while preserving per-prompt diversity.
In the WIN-RLoRA rows of Table~\ref{tab:win_far_nulltext}, we first retrieve a seed via WIN-FAR and then apply the refiner, which further improves text--motion alignment.

\subsection{Motion Stylization Setup}
\label{sec:supp_stylization}

As illustrated in Fig.~\ref{fig:styleoverview} (Sec.~\ref{sec:task_adapt}), the stylization pipeline extends the base WINRO framework.
We construct the style-specific noise dictionary $\mathcal{D}_s$ by applying DDIM inversion to motions from each style category of the 100STYLE dataset~\cite{mason2022local}. Because these entries originate from style motions rather than random Gaussian samples, they inherently encode stylistic attributes but offer limited text-semantic coverage.
Given a text prompt, we retrieve the best-matching entry from $\mathcal{D}_s$ using the same text-based retrieval as in the base framework (Eq.~\ref{eq:retrieval}).
The retrieved noise is then refined by jointly optimizing the semantic loss $\mathcal{L}_{sim}$ and a style loss that measures cosine distance in the style feature space between the generated motion and the reference style motion.

We adopt SMooDi~\cite{zhong2024smoodi}, built on MLD~\cite{chen2023executing}, as the stylization backbone.
In addition to the standard metrics, we report Style Recognition Accuracy (SRA), which measures how well the generated motion preserves the target style, and Foot Skating Rate, which quantifies physically implausible foot sliding artifacts.

\subsection{WINRO Pipeline Pseudocode}
\label{sec:supp_pseudocode}
Algorithm~\ref{alg:winro} summarizes the complete WINRO pipeline.
The dictionary is constructed once offline per backbone; at inference, only retrieval and optional refinement are executed.
The refinement loop backpropagates through the entire deterministic denoising chain of $G_\theta$; this is feasible because $G_\theta$ is frozen and all operations are differentiable under DDIM sampling.

\begin{algorithm}[t]
\caption{WINRO: Winning Noise Retrieval and Optimization}
\label{alg:winro}
\begin{algorithmic}[1]
\REQUIRE Frozen generator $G_\theta$, motion--text encoders $\mathcal{F}_\text{Motion}$, $\mathcal{F}_\text{Text}$, dictionary size $N$, text prompt $c$, target length $l$, KL weight $\lambda$, optimization steps $S$, learning rate $\alpha$
\ENSURE Refined initial noise $\x_T^*$

\STATE \textbf{--- Dictionary construction (offline, once per backbone) ---}
\FOR{$i = 1, \ldots, N$}
    \STATE Sample $\x_T^{(i)} \sim \mathcal{N}(\mathbf{0}, \mathbf{I})$
    \STATE Generate null-prompt motion: $\x_0^{(i)} = G_\theta(\x_T^{(i)}, \varnothing)$
    \FOR{each target length $l' \in \{m, m{+}\Delta, \ldots, M\}$}
        \STATE $\z_{l'}^{(i)} \leftarrow \mathcal{F}_\text{Motion}(\x_0^{(i)}[{:}l'])$
    \ENDFOR
    \STATE Store $\{(\z_{l'}^{(i)}, \x_T^{(i)})\}$ in dictionary $\mathcal{D}$
\ENDFOR

\STATE \textbf{--- Retrieval (per query) ---}
\STATE $\mathbf{t} \leftarrow \mathcal{F}_\text{Text}(c)$
\STATE $\x_T^{\dagger} \leftarrow \arg\max_{(\z, \x_T) \in \mathcal{D}} \cos(\z, \mathbf{t})$
\STATE $\x_T^{\dagger} \leftarrow \x_T^{\dagger} + \boldsymbol{\eta},\quad \boldsymbol{\eta} \sim \mathcal{N}(\mathbf{0}, \sigma^2 \mathbf{I})$ \hfill $\triangleright$ perturbation

\STATE \textbf{--- Refinement (per query) ---}
\STATE $\x_T \leftarrow \x_T^{\dagger}$
\FOR{$s = 1, \ldots, S$}
    \STATE $\x_0 \leftarrow G_\theta(\x_T, c)$ \hfill $\triangleright$ full deterministic denoising
    \STATE $\mathcal{L} \leftarrow (1 - \cos(\mathcal{F}_\text{Motion}(\x_0),\, \mathbf{t})) + \lambda \, D_\text{KL}(\mathcal{N}(\mu(\x_T), \sigma^2(\x_T)) \| \mathcal{N}(0,1))$
    \STATE $\x_T \leftarrow \x_T - \alpha \,\nabla_{\x_T} \mathcal{L}$ \hfill $\triangleright$ gradient through frozen $G_\theta$
\ENDFOR
\RETURN $\x_T^* \leftarrow \x_T$
\end{algorithmic}
\end{algorithm}

\section{Extended Variant Comparison}
\label{sec:supp_variants}

Table~\ref{tab:winro_full_variants} extends Table~\ref{tab:win_far_nulltext} with additional WINRO variants on HumanML3D, including optimization from random noise without retrieval (WIN-O), the LoRA-based noise refiner applied to random seeds without retrieval (WIN-LoRA; Sec.~\ref{sec:supp_lora}), and stochastic sampling strategies.
Each row is characterized by three design choices:
\textbf{Retrieval}---initial noise selection from the noise dictionary (Sec.~\ref{sec:stage_1}): random Gaussian (\,--\,), Top-1 retrieval, or Top-$k$ ($k{=}150$) retrieval;
\textbf{Refine}---noise refinement: none (\,--\,), KL-regularized optimization (Optim; Sec.~\ref{sec:stage_2}), or the LoRA-based noise refiner (LoRA; Sec.~\ref{sec:supp_lora});
\textbf{Perturbation}---Gaussian perturbation of the final noise ($\sigma{=}0.5$; Eq.~\ref{eq:perturbation}).
Top-$k$ retrieval and perturbation are stochastic strategies that recover per-prompt diversity; the main paper uses Top-1 retrieval with perturbation for WIN-FAR and omits perturbation for WINRO where KL regularization already preserves the prior.
Shaded rows correspond to configurations reported in Table~\ref{tab:win_far_nulltext}.

Table~\ref{tab:winro_full_variants} additionally includes \textbf{MultiModality (MModality)}~\cite{Guo_2022_CVPR_humanml3d}, which measures the diversity of motions generated from the \emph{same} text prompt; higher values indicate greater per-prompt variety.
Deterministic configurations that produce a single output per prompt have undefined MModality, shown as ``--''.
Results are averaged over 20 evaluation runs with 95\% confidence intervals.

\begin{table}[t]
    \centering
    \caption{Extended comparison of WINRO variants on HumanML3D.
    Retr.: initial noise selection from the noise dictionary---random Gaussian (\,--\,), Top-1, or Top-$k$ ($k{=}150$);
    Refine: noise refinement---none (\,--\,), KL-regularized optimization (Optim), or the LoRA-based noise refiner (LoRA);
    Pert.: Gaussian perturbation of the final noise ($\sigma{=}0.5$).
    \colorbox[gray]{0.93}{Shaded rows} correspond to Table~\ref{tab:win_far_nulltext}.}
    \label{tab:winro_full_variants}
    \setlength{\tabcolsep}{2pt}

    \vspace{2pt}
    \textbf{MDM-50step}\\[2pt]
    \resizebox{\linewidth}{!}{
    \begin{tabular}{ccc ccccccc}
        \toprule
        \textbf{Retr.} & \textbf{Refine} & \textbf{Pert.}
        & \textbf{FID} $\downarrow$ & \textbf{R-Top1} $\uparrow$ & \textbf{R-Top2} $\uparrow$ & \textbf{R-Top3} $\uparrow$
        & \textbf{Div.} $\rightarrow$ & \textbf{MMDist} $\downarrow$ & \textbf{MModality} $\uparrow$ \\
        \midrule
        \rowcolor[gray]{0.93} -- & -- & --
        & $0.438^{\scriptstyle\pm.009}$ & $0.469^{\scriptstyle\pm.002}$ & $0.661^{\scriptstyle\pm.002}$ & $0.763^{\scriptstyle\pm.002}$
        & $10.025^{\scriptstyle\pm.066}$ & $3.258^{\scriptstyle\pm.009}$ & $2.169^{\scriptstyle\pm.050}$ \\
        \rowcolor[gray]{0.93} Top-1 & -- & $\checkmark$
        & $0.248^{\scriptstyle\pm.009}$ & $0.512^{\scriptstyle\pm.003}$ & $0.706^{\scriptstyle\pm.003}$ & $0.804^{\scriptstyle\pm.002}$
        & $10.295^{\scriptstyle\pm.033}$ & $3.013^{\scriptstyle\pm.011}$ & $1.422^{\scriptstyle\pm.028}$ \\
        Top-$k$ & -- & $\checkmark$
        & $0.190^{\scriptstyle\pm.003}$ & $0.493^{\scriptstyle\pm.003}$ & $0.690^{\scriptstyle\pm.002}$ & $0.789^{\scriptstyle\pm.002}$
        & $9.652^{\scriptstyle\pm.035}$ & $3.104^{\scriptstyle\pm.006}$ & $\mathbf{2.302}^{\scriptstyle\pm.017}$ \\
        \cmidrule(lr){1-10}
        -- & Optim & --
        & $0.106^{\scriptstyle\pm.004}$ & $0.528^{\scriptstyle\pm.003}$ & $0.718^{\scriptstyle\pm.002}$ & $0.808^{\scriptstyle\pm.002}$
        & $9.638^{\scriptstyle\pm.041}$ & $2.935^{\scriptstyle\pm.006}$ & -- \\
        \rowcolor[gray]{0.93} Top-1 & Optim & --
        & $0.115^{\scriptstyle\pm.002}$ & $0.539^{\scriptstyle\pm.002}$ & $0.727^{\scriptstyle\pm.004}$ & $0.817^{\scriptstyle\pm.002}$
        & $9.657^{\scriptstyle\pm.026}$ & $2.883^{\scriptstyle\pm.010}$ & -- \\
        Top-1 & Optim & $\checkmark$
        & $\mathbf{0.093}^{\scriptstyle\pm.004}$ & $0.529^{\scriptstyle\pm.002}$ & $0.714^{\scriptstyle\pm.005}$ & $0.810^{\scriptstyle\pm.002}$
        & $9.418^{\scriptstyle\pm.029}$ & $2.933^{\scriptstyle\pm.009}$ & $1.171^{\scriptstyle\pm.039}$ \\
        \cmidrule(lr){1-10}
        -- & LoRA & --
        & $0.237^{\scriptstyle\pm.006}$ & $0.533^{\scriptstyle\pm.003}$ & $0.728^{\scriptstyle\pm.003}$ & $0.822^{\scriptstyle\pm.002}$
        & $9.885^{\scriptstyle\pm.060}$ & $2.931^{\scriptstyle\pm.008}$ & $1.620^{\scriptstyle\pm.043}$ \\
        Top-1 & LoRA & --
        & $0.238^{\scriptstyle\pm.005}$ & $\mathbf{0.550}^{\scriptstyle\pm.002}$ & $\mathbf{0.744}^{\scriptstyle\pm.003}$ & $\mathbf{0.834}^{\scriptstyle\pm.002}$
        & $10.303^{\scriptstyle\pm.022}$ & $\mathbf{2.846}^{\scriptstyle\pm.006}$ & -- \\
        \rowcolor[gray]{0.93} Top-1 & LoRA & $\checkmark$
        & $0.102^{\scriptstyle\pm.002}$ & $0.546^{\scriptstyle\pm.003}$ & $0.741^{\scriptstyle\pm.003}$ & $0.832^{\scriptstyle\pm.002}$
        & $9.577^{\scriptstyle\pm.025}$ & $\mathbf{2.846}^{\scriptstyle\pm.007}$ & $1.097^{\scriptstyle\pm.017}$ \\
        Top-$k$ & LoRA & $\checkmark$
        & $0.098^{\scriptstyle\pm.002}$ & $0.535^{\scriptstyle\pm.002}$ & $0.732^{\scriptstyle\pm.002}$ & $0.823^{\scriptstyle\pm.002}$
        & $\mathbf{9.528}^{\scriptstyle\pm.023}$ & $2.897^{\scriptstyle\pm.006}$ & $1.826^{\scriptstyle\pm.016}$ \\
        \bottomrule
    \end{tabular}}

    \vspace{6pt}
    \textbf{MotionLCM}\\[2pt]
    \resizebox{\linewidth}{!}{
    \begin{tabular}{ccc ccccccc}
        \toprule
        \textbf{Retr.} & \textbf{Refine} & \textbf{Pert.}
        & \textbf{FID} $\downarrow$ & \textbf{R-Top1} $\uparrow$ & \textbf{R-Top2} $\uparrow$ & \textbf{R-Top3} $\uparrow$
        & \textbf{Div.} $\rightarrow$ & \textbf{MMDist} $\downarrow$ & \textbf{MModality} $\uparrow$ \\
        \midrule
        \rowcolor[gray]{0.93} -- & -- & --
        & $0.093^{\scriptstyle\pm.004}$ & $0.548^{\scriptstyle\pm.003}$ & $0.741^{\scriptstyle\pm.003}$ & $0.835^{\scriptstyle\pm.002}$
        & $9.566^{\scriptstyle\pm.069}$ & $2.763^{\scriptstyle\pm.008}$ & $\mathbf{1.799}^{\scriptstyle\pm.062}$ \\
        \rowcolor[gray]{0.93} Top-1 & -- & $\checkmark$
        & $0.083^{\scriptstyle\pm.003}$ & $0.572^{\scriptstyle\pm.003}$ & $0.767^{\scriptstyle\pm.002}$ & $0.855^{\scriptstyle\pm.001}$
        & $9.474^{\scriptstyle\pm.028}$ & $2.674^{\scriptstyle\pm.006}$ & $1.051^{\scriptstyle\pm.040}$ \\
        Top-$k$ & -- & $\checkmark$
        & $0.085^{\scriptstyle\pm.003}$ & $0.552^{\scriptstyle\pm.002}$ & $0.751^{\scriptstyle\pm.002}$ & $0.842^{\scriptstyle\pm.002}$
        & $9.405^{\scriptstyle\pm.024}$ & $2.724^{\scriptstyle\pm.005}$ & $1.741^{\scriptstyle\pm.060}$ \\
        \cmidrule(lr){1-10}
        -- & Optim & --
        & $0.066^{\scriptstyle\pm.003}$ & $0.573^{\scriptstyle\pm.003}$ & $0.764^{\scriptstyle\pm.002}$ & $0.855^{\scriptstyle\pm.002}$
        & $9.176^{\scriptstyle\pm.022}$ & $2.628^{\scriptstyle\pm.004}$ & -- \\
        \rowcolor[gray]{0.93} Top-1 & Optim & --
        & $0.072^{\scriptstyle\pm.002}$ & $\mathbf{0.580}^{\scriptstyle\pm.002}$ & $\mathbf{0.775}^{\scriptstyle\pm.002}$ & $\mathbf{0.861}^{\scriptstyle\pm.001}$
        & $9.242^{\scriptstyle\pm.026}$ & $\mathbf{2.609}^{\scriptstyle\pm.005}$ & -- \\
        Top-1 & Optim & $\checkmark$
        & $\mathbf{0.061}^{\scriptstyle\pm.002}$ & $0.569^{\scriptstyle\pm.002}$ & $0.767^{\scriptstyle\pm.002}$ & $0.857^{\scriptstyle\pm.002}$
        & $9.447^{\scriptstyle\pm.026}$ & $2.641^{\scriptstyle\pm.005}$ & $1.108^{\scriptstyle\pm.045}$ \\
        \cmidrule(lr){1-10}
        -- & LoRA & --
        & $0.103^{\scriptstyle\pm.005}$ & $0.559^{\scriptstyle\pm.002}$ & $0.754^{\scriptstyle\pm.002}$ & $0.845^{\scriptstyle\pm.003}$
        & $9.637^{\scriptstyle\pm.092}$ & $2.714^{\scriptstyle\pm.008}$ & $1.603^{\scriptstyle\pm.066}$ \\
        Top-1 & LoRA & --
        & $0.114^{\scriptstyle\pm.003}$ & $0.575^{\scriptstyle\pm.002}$ & $0.774^{\scriptstyle\pm.001}$ & $\mathbf{0.861}^{\scriptstyle\pm.002}$
        & $10.076^{\scriptstyle\pm.018}$ & $2.640^{\scriptstyle\pm.005}$ & -- \\
        \rowcolor[gray]{0.93} Top-1 & LoRA & $\checkmark$
        & $0.083^{\scriptstyle\pm.002}$ & $\mathbf{0.580}^{\scriptstyle\pm.002}$ & $0.772^{\scriptstyle\pm.001}$ & $0.860^{\scriptstyle\pm.001}$
        & $\mathbf{9.550}^{\scriptstyle\pm.032}$ & $2.649^{\scriptstyle\pm.005}$ & $0.975^{\scriptstyle\pm.0042}$ \\
        Top-$k$ & LoRA & $\checkmark$
        & $0.078^{\scriptstyle\pm.002}$ & $0.562^{\scriptstyle\pm.002}$ & $0.760^{\scriptstyle\pm.002}$ & $0.849^{\scriptstyle\pm.002}$
        & $\mathbf{9.518}^{\scriptstyle\pm.027}$ & $2.681^{\scriptstyle\pm.004}$ & $1.617^{\scriptstyle\pm.068}$ \\
        \bottomrule
    \end{tabular}}
\end{table}

\paragraph{WIN-O: optimization without retrieval.}
WIN-O applies the test-time optimization objective (Eq.~\ref{eq:optimization}) directly to random Gaussian noise, bypassing the retrieval stage.
This already yields substantial gains over the base model (e.g., MDM FID: $0.438 \to 0.106$).
Adding Top-1 retrieval as initialization (WINRO) further improves R-Precision (MDM R-Top1: $0.528 \to 0.539$; MotionLCM: $0.573 \to 0.580$), consistent with the smoothness argument in Sec.~\ref{sec:theory_noise}: a retrieved seed lies closer to the target region, benefiting local refinement.

\paragraph{WIN-LoRA: LoRA refiner on random seeds.}
WIN-LoRA applies the LoRA refiner (Sec.~\ref{sec:supp_lora}) to random noise without retrieval, achieving competitive alignment while preserving non-trivial MModality, since different random seeds receive different corrections.
Compared with WIN-RLoRA (Table~\ref{tab:win_far_nulltext}), which combines Top-1 retrieval with the refiner, WIN-LoRA trades alignment (e.g., MDM R-Top1: $0.533$ vs.\ $0.550$) for per-prompt diversity.

\paragraph{Stochastic sampling.}
Adding perturbation to WIN-RLoRA recovers MModality while substantially reducing FID (e.g., MDM: $0.238 \to 0.102$).
Top-$k$ retrieval provides complementary diversity by sampling from a broader candidate set.

\section{Ablation Studies}
\label{sec:supp_ablation}

\subsection{Noise Dictionary Size}
\label{sec:supp_dictsize}

Table~\ref{tab:dictionary} ablates the noise dictionary size for WIN-FAR on MotionLCM, with Gaussian perturbation ($\sigma{=}0.5$) applied after retrieval. Results are averaged over 20 evaluation runs with 95\% confidence intervals.
Increasing the dictionary size steadily improves R-Precision and MMDist, indicating that a larger pool of candidate noises enables better text--motion matching.
FID improves from 100 to 10k entries and plateaus beyond that, suggesting diminishing returns from further expansion.
We use 10k as the default dictionary size throughout the paper, as it offers a favorable balance between alignment gains and dictionary construction cost.

\begin{table}[t]
  \centering
  \caption{Effect of noise dictionary size on MotionLCM with WIN-FAR (perturbation $\sigma{=}0.5$).}
  \label{tab:dictionary}
  \begin{tabular}{lccccccc}
    \toprule
    \textbf{Method} & \textbf{Size} & \textbf{FID} $\downarrow$ & \textbf{R-Top1} $\uparrow$ & \textbf{R-Top2} $\uparrow$ & \textbf{R-Top3} $\uparrow$ & \textbf{Div.} $\rightarrow$ & \textbf{MMDist} $\downarrow$ \\
    \midrule
    MotionLCM & -- & $0.093^{\scriptstyle\pm.004}$ & $0.548^{\scriptstyle\pm.003}$ & $0.741^{\scriptstyle\pm.003}$ & $0.835^{\scriptstyle\pm.002}$ & $9.566^{\scriptstyle\pm.069}$ & $2.763^{\scriptstyle\pm.008}$ \\
    \midrule
    w/ WIN-FAR & 100   & $0.094^{\scriptstyle\pm.006}$ & $0.559^{\scriptstyle\pm.002}$ & $0.754^{\scriptstyle\pm.002}$ & $0.845^{\scriptstyle\pm.002}$ & $9.333^{\scriptstyle\pm.019}$ & $2.730^{\scriptstyle\pm.009}$ \\
    w/ WIN-FAR & 1k    & $0.084^{\scriptstyle\pm.003}$ & $0.566^{\scriptstyle\pm.003}$ & $0.761^{\scriptstyle\pm.002}$ & $0.851^{\scriptstyle\pm.002}$ & $9.404^{\scriptstyle\pm.030}$ & $2.696^{\scriptstyle\pm.005}$ \\
    w/ WIN-FAR & 10k   & $0.083^{\scriptstyle\pm.003}$ & $0.572^{\scriptstyle\pm.003}$ & $0.767^{\scriptstyle\pm.002}$ & $0.855^{\scriptstyle\pm.001}$ & $9.474^{\scriptstyle\pm.028}$ & $2.674^{\scriptstyle\pm.006}$ \\
    w/ WIN-FAR & 100k  & $0.085^{\scriptstyle\pm.003}$ & $0.575^{\scriptstyle\pm.001}$ & $0.769^{\scriptstyle\pm.002}$ & $0.857^{\scriptstyle\pm.002}$ & $9.490^{\scriptstyle\pm.029}$ & $2.659^{\scriptstyle\pm.005}$ \\
    \bottomrule
  \end{tabular}
\end{table}

\subsection{Optimization Hyperparameters}
\label{sec:supp_opt_ablation}

Table~\ref{tab:ablation} ablates the refinement hyperparameters on MotionLCM, starting from the same WIN-FAR retrieval result.
The KL regularization weight $\lambda$ has the largest impact: without regularization ($\lambda{=}0$), the optimized noise departs from the Gaussian prior, degrading FID to $0.136$ despite comparable alignment. This confirms that the KL term in Eq.~\ref{eq:optimization} is essential for preserving generation quality during refinement. $\lambda{=}250$ yields the best balance between alignment and realism.
In contrast, performance is robust to the learning rate across an order of magnitude ($0.01$--$0.1$), and cosine similarity consistently outperforms Euclidean distance.
Based on these results, we adopt cosine similarity, $\lambda{=}250$, and lr$=$0.05 as defaults.

\begin{table}[t]
  \centering
  \caption{Ablation of WINRO refinement hyperparameters on MotionLCM. All runs start from the same WIN-FAR retrieval result and use 100 optimization steps. Each block varies one factor from the default setting (first row).}
  \label{tab:ablation}
  \resizebox{\linewidth}{!}{
  \begin{tabular}{lcccccc}
    \toprule
    \textbf{Method} & \textbf{FID} $\downarrow$ & \textbf{R-Top1} $\uparrow$ & \textbf{R-Top2} $\uparrow$ & \textbf{R-Top3} $\uparrow$ & \textbf{Div.} $\rightarrow$ & \textbf{MMDist} $\downarrow$ \\
    \midrule
    \textbf{Default} ($\lambda{=}250$, lr${=}0.05$, cos.) & \textbf{0.066} & \textbf{0.580} & 0.769 & 0.856 & 9.189 & 2.617 \\
    \midrule
    $\lambda=0$    & 0.136 & 0.549 & 0.741 & 0.837 & 9.833 & 2.717 \\
    $\lambda=2.5$  & 0.088 & 0.563 & 0.756 & 0.847 & 9.046 & 2.682 \\
    $\lambda=25$   & 0.070 & 0.567 & 0.762 & 0.850 & 9.108 & 2.657 \\
    $\lambda=2500$ & 0.083 & 0.577 & \textbf{0.774} & \textbf{0.863} & 9.242 & \textbf{2.616} \\
    \midrule
    lr$=$0.005 & 0.084 & 0.572 & \textbf{0.774} & \textbf{0.864} & 9.248 & \textbf{2.616} \\
    lr$=$0.5   & 0.073 & 0.575 & 0.768 & 0.854 & 9.242 & 2.624 \\
    \midrule
    Euclidean & 0.095 & 0.563 & 0.767 & 0.851 & 9.255 & 2.671 \\
    \bottomrule
  \end{tabular}}
\end{table}

\subsection{Retrieval Backbone}
\label{sec:supp_other_retrieval}

In the main experiments we use TMR~\cite{petrovich2023tmr} as the retrieval model.
To check whether our framework depends on a particular retrieval backbone, we also replace TMR with PartTMR~\cite{yu2025remogpt} and repeat the experiments on MotionLCM.
All other components, including the diffusion model, optimization setup, and evaluation protocol, are kept unchanged.

\begin{table}[t]
  \centering
  \caption{Effect of different retrieval backbones for WINRO on MotionLCM.}
  \label{tab:other_retrieval}
  \resizebox{\linewidth}{!}{
  \begin{tabular}{lccccccc}
    \toprule
    \textbf{Method} & \textbf{Backbone} & \textbf{FID} $\downarrow$ & \textbf{R-Top1} $\uparrow$ & \textbf{R-Top2} $\uparrow$ & \textbf{R-Top3} $\uparrow$ & \textbf{Div.} $\rightarrow$ & \textbf{MMDist} $\downarrow$ \\
    \midrule
    MotionLCM & --        & $0.093^{\scriptstyle\pm.004}$ & $0.548^{\scriptstyle\pm.003}$ & $0.741^{\scriptstyle\pm.003}$ & $0.835^{\scriptstyle\pm.002}$ & $9.566^{\scriptstyle\pm.069}$ & $2.763^{\scriptstyle\pm.008}$ \\
    \midrule
    w/ WIN-FAR & PartTMR   & $0.086^{\scriptstyle\pm.003}$ & $0.569^{\scriptstyle\pm.002}$ & $0.765^{\scriptstyle\pm.002}$ & $0.854^{\scriptstyle\pm.002}$ & $9.450^{\scriptstyle\pm.019}$ & $2.679^{\scriptstyle\pm.005}$ \\
    w/ WINRO   & PartTMR   & $0.070^{\scriptstyle\pm.002}$ & $0.579^{\scriptstyle\pm.002}$ & $0.777^{\scriptstyle\pm.002}$ & $0.861^{\scriptstyle\pm.002}$ & $9.210^{\scriptstyle\pm.017}$ & $2.601^{\scriptstyle\pm.005}$ \\
    w/ WIN-LoRA & PartTMR  & $0.103^{\scriptstyle\pm.003}$ & $0.563^{\scriptstyle\pm.004}$ & $0.761^{\scriptstyle\pm.003}$ & $0.851^{\scriptstyle\pm.002}$ & $9.587^{\scriptstyle\pm.066}$ & $2.699^{\scriptstyle\pm.007}$ \\
    \bottomrule
  \end{tabular}}
\end{table}

As shown in Table~\ref{tab:other_retrieval}, the PartTMR-based variant shows the same qualitative behavior as the original TMR-based one.
This indicates that our method does not rely on a specific retrieval backbone and can work with different motion--text retrieval models as long as they provide a shared feature space for text and motion.

\subsection{LoRA Rank and Injection Ablation}
\label{sec:supp_lora_ablation}

Table~\ref{tab:lora_ablation} ablates the LoRA rank $r$ and injection configuration of the WIN-LoRA noise refiner on MotionLCM.
Performance is robust across ranks from 16 to 128, with the default setting ($r{=}64$, full injection) offering a good balance.
Removing attention or feed-forward injections slightly reduces alignment, confirming that both pathways contribute to the learned noise correction.

\begin{table}[t]
    \centering
    \caption{WIN-LoRA ablation on MotionLCM varying rank and injection site.}
    \label{tab:lora_ablation}
    \setlength{\tabcolsep}{1.8pt}
    \resizebox{\linewidth}{!}{
        \begin{tabular}{llcccccccc}
            \toprule
            \textbf{Factor} & \textbf{Variant} & \textbf{FID}$\downarrow$ & \textbf{R-Top1}$\uparrow$ & \textbf{R-Top2}$\uparrow$ & \textbf{R-Top3}$\uparrow$ & \textbf{MMDist}$\downarrow$ & \textbf{Div.}$\rightarrow$ & \textbf{$\Delta$R-Top1} \\
            \midrule
            WIN-LoRA & full, r64 & 0.103 & 0.559 & 0.754 & 0.845 & 2.714 & 9.637 & -- \\
            \midrule
            Rank & r16 & \textbf{0.097} & 0.555 & 0.750 & 0.842 & 2.721 & 9.577 & $-$0.004 \\
            Rank & r32 & 0.104 & 0.553 & 0.752 & 0.843 & 2.721 & 9.623 & $-$0.006 \\
            Rank & r128 & 0.109 & \textbf{0.560} & \textbf{0.757} & \textbf{0.847} & \textbf{2.697} & 9.658 & +0.001 \\
            Injection & w/o Attn & 0.112 & 0.558 & 0.755 & 0.844 & 2.714 & 9.705 & $-$0.001 \\
            Injection & w/o Attn, FFN & 0.097 & 0.553 & 0.751 & 0.842 & 2.727 & 9.701 & $-$0.006 \\
            \bottomrule
        \end{tabular}
    }
\end{table}

\section{Additional Generalization Experiments}
\label{sec:supp_generalization}

\subsection{KIT-ML Dataset}
\label{sec:supp_kitml}

To verify that WINRO generalizes beyond HumanML3D, we evaluate on KIT-ML~\cite{Plappert2016kit} using MDM-50step with the same noise dictionary.
Table~\ref{tab:kitml} shows that both retrieval and refinement yield consistent improvements, confirming that winning noise tickets are not specific to a single dataset.

\begin{table}[t]
    \centering
    \caption{Generalization to KIT-ML using MDM-50step. Results are averaged over 20 evaluation runs with 95\% confidence intervals.}
    \label{tab:kitml}
    \resizebox{\linewidth}{!}{
    \begin{tabular}{lcccccc}
        \toprule
        \textbf{Method} & \textbf{FID} $\downarrow$ & \textbf{R-Top1} $\uparrow$ & \textbf{R-Top2} $\uparrow$ & \textbf{R-Top3} $\uparrow$ & \textbf{Div.} $\rightarrow$ & \textbf{MMDist} $\downarrow$ \\
        \midrule
        MDM-50step & $0.382_{\pm0.050}$ & $0.443_{\pm0.007}$ & $0.658_{\pm0.008}$ & $0.777_{\pm0.009}$ & $11.364_{\pm0.129}$ & $2.733_{\pm0.026}$ \\
        w/ WIN-FAR & $0.366_{\pm0.018}$ & $0.451_{\pm0.007}$ & $\mathbf{0.681}_{\pm0.006}$ & $0.789_{\pm0.004}$ & $11.246_{\pm0.029}$ & $2.714_{\pm0.014}$ \\
        w/ WINRO & $\mathbf{0.347}_{\pm0.025}$ & $\mathbf{0.454}_{\pm0.011}$ & $0.678_{\pm0.013}$ & $\mathbf{0.792}_{\pm0.013}$ & $11.403_{\pm0.157}$ & $\mathbf{2.654}_{\pm0.028}$ \\
        \bottomrule
    \end{tabular}}
\end{table}

\subsection{Rare-Prompt Analysis}
\label{sec:supp_rare}

To assess whether WINRO also benefits rare or difficult prompts, we split the HumanML3D test set by CLIP-based prompt rareness following ReMoDiffuse~\cite{zhang2023remodiffuse}: each caption is scored by $r(p){=}1{-}\max_i \cos(E_\text{CLIP}(p), E_\text{CLIP}(t_i))$ over training captions $t_i$, which is independent of TMR.
Table~\ref{tab:rare} reports MMDist on the rarest 20\% and tail 5\% subsets.
WINRO provides larger improvements on rarer prompts, suggesting that noise retrieval and refinement are particularly beneficial when the base model has fewer relevant training examples to draw upon.
Note that this remains an in-distribution evaluation; similar to other optimization methods, WINRO cannot create motions outside the frozen backbone's learned support.

\begin{table}[t]
    \centering
    \caption{MMDist ($\downarrow$) on rare-prompt subsets of HumanML3D.}
    \label{tab:rare}
    \begin{tabular}{lcc}
        \toprule
        Method & Rare-20\% & Tail-5\% \\
        \midrule
        MDM-50step & 4.06 & 4.17 \\
        w/ WIN-FAR & 3.79 & 3.82 \\
        w/ WINRO   & 3.34 & 3.39 \\
        \bottomrule
    \end{tabular}
\end{table}

\section{Additional Qualitative Examples}
\label{sec:supp_qual}

We provide additional qualitative examples that complement the quantitative results and show how different initial noise settings affect the generated motions.

\begin{figure}[t]
    \centering
    \includegraphics[width=1\linewidth]{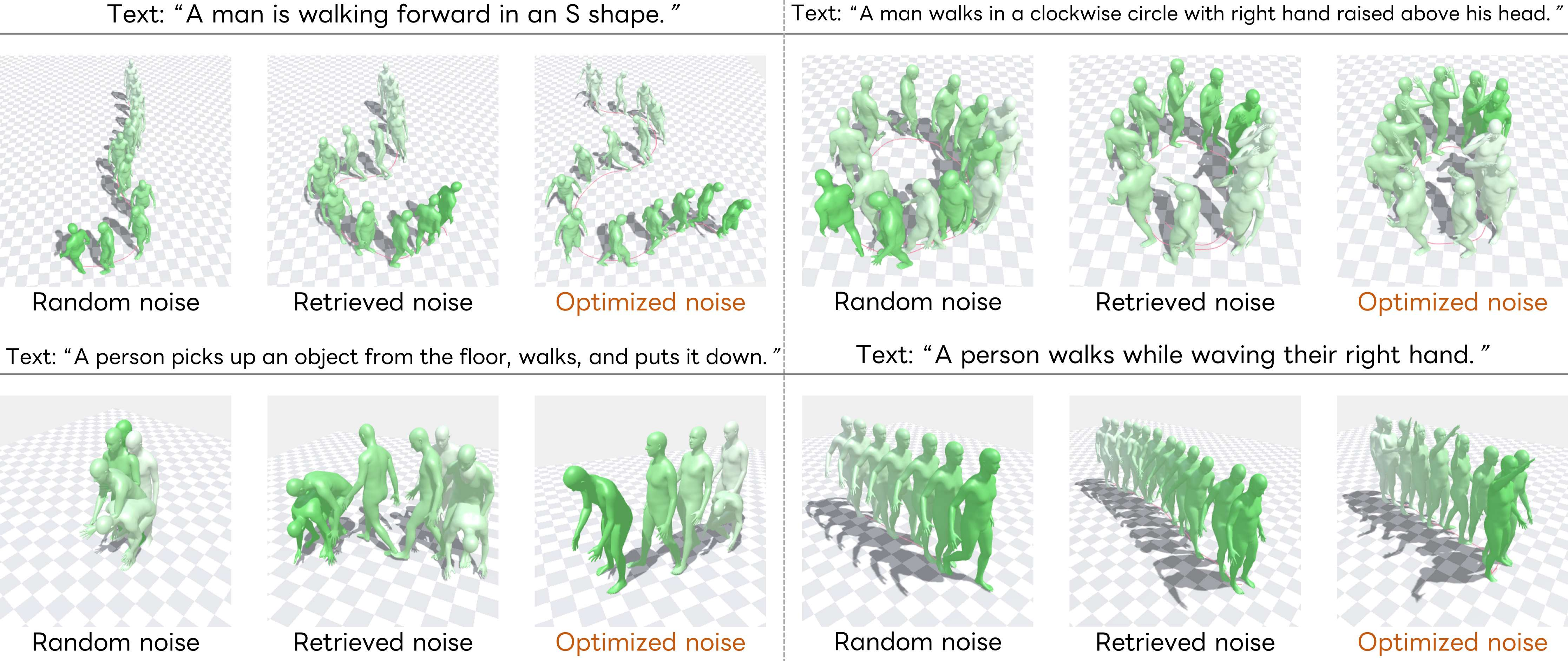}
    \caption{Additional comparison of motions generated with random, retrieved, and refined initial noise for each text prompt, using MDM-50step.}
    \label{fig:qualitative_additional}
\end{figure}

Fig.~\ref{fig:qualitative_additional} presents additional text-to-motion examples extending Fig.~\ref{fig:basecomparison}, comparing motions generated from random noise, retrieved noise (WIN-FAR), and optimized noise (WINRO) under MDM-50step.
Retrieved noise already produces motions closer to the text descriptions, and refinement further improves faithfulness.

\begin{figure}[t]
    \centering
    \includegraphics[width=1\linewidth]{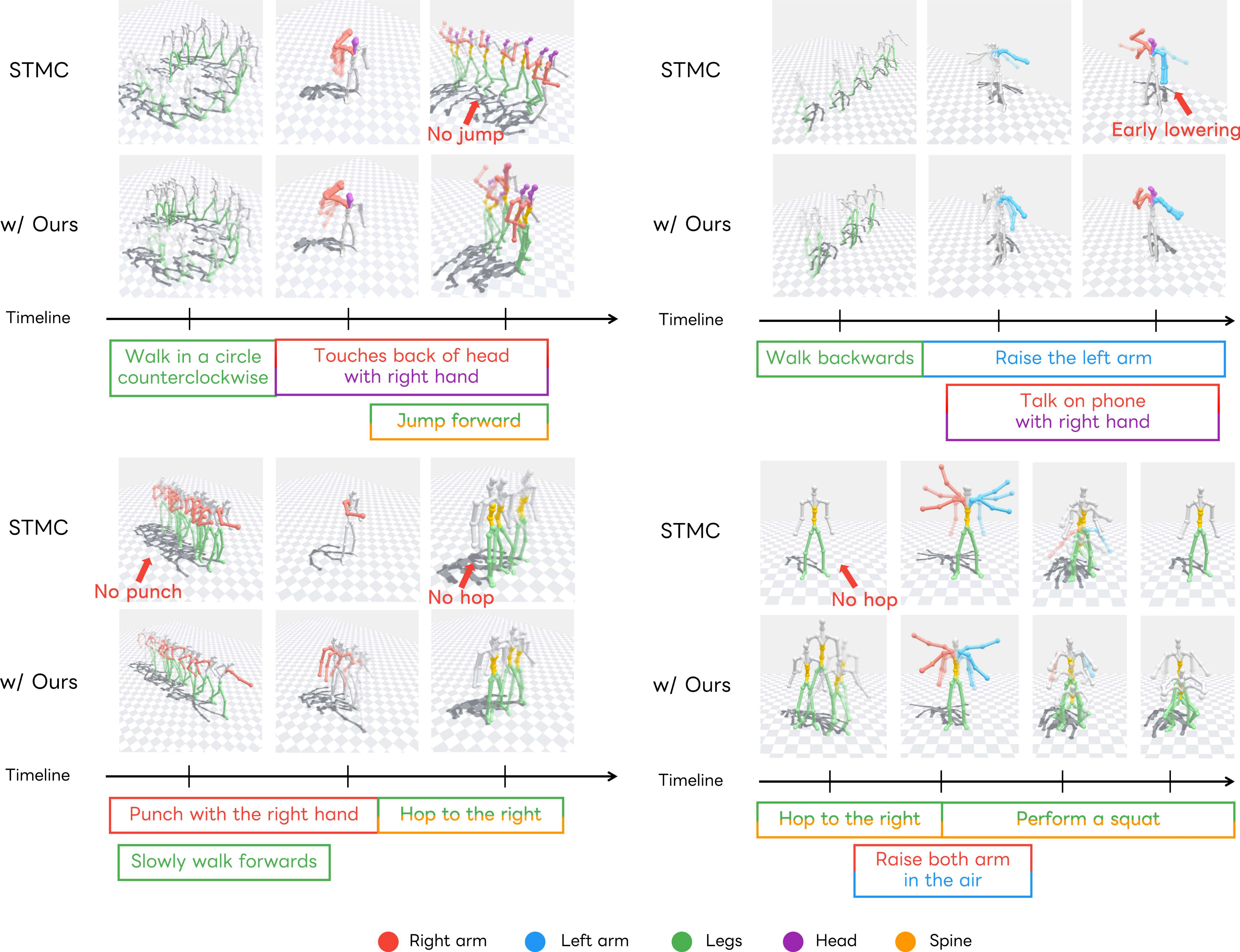}
    \caption{Additional compositional generation results on the multi-track timeline benchmark. STMC~\cite{petrovich2024stmc} often fails to realize individual action segments (\eg, missing jump, punch, or hop), while applying our noise refinement (w/ Ours) improves faithfulness to each segment's text prompt.}
    \label{fig:stmc_additional}
\end{figure}

Fig.~\ref{fig:stmc_additional} shows additional compositional generation examples using STMC~\cite{petrovich2024stmc} with and without our noise refinement, extending the comparison in Fig.~\ref{fig:stmc_qualitative}.
Across four multi-track timelines with overlapping and sequential action prompts, STMC frequently drops or truncates individual actions, whereas applying our method consistently improves segment-level faithfulness.

For animated visualizations of all results, we strongly encourage readers to view the supplementary video and the interactive motion viewer included in the supplementary materials.

\end{document}